\newtheorem{theorem}{Theorem}
\newcommand{\Ex}[2]{\mathbb{E}_{#1}\left[#2\right]}
\newcommand{\R}{\mathbb{R}}
\newcommand{\diag}[0]{\operatorname{diag}}
\newcommand{\perm}[0]{\operatorname{perm}}
\newcommand{\dkl}{D_\text{KL}}
\newcommand{\name}{BCD Nets}
\newcommand{\fullname}{Bayesian Causal Discovery Nets}
\title{
  BCD Nets: Scalable Variational Approaches for Bayesian Causal Discovery
}
\author{%
  Chris Cundy$^1$ \quad Aditya Grover$^{2,3}$ \quad Stefano Ermon$^1$\vspace{2pt}\\
  $^1$Department of Computer Science, Stanford University\\
  $^2$Facebook AI Research\\
  $^3$University of California, Los Angeles\vspace{4pt}\\
  \texttt{\{cundy, ermon\}@cs.stanford.edu}\\
	\texttt{adityag@cs.ucla.edu} \quad
}
\begin{document}
\maketitle

\begin{abstract}
A structural equation model (SEM) is an effective framework to reason over causal relationships represented via a directed acyclic graph (DAG).
Recent advances have enabled effective maximum-likelihood point estimation of DAGs from observational data. 
However, a point estimate may not accurately capture the uncertainty in inferring the underlying graph in practical scenarios, wherein the true DAG is non-identifiable and/or the observed dataset is limited.
We propose Bayesian Causal Discovery Nets (BCD Nets), a variational inference framework for estimating a \textit{distribution} over DAGs characterizing a linear-Gaussian SEM.
Developing a full Bayesian posterior over DAGs is challenging due to the the discrete and combinatorial nature of graphs.
We analyse key design choices for scalable VI over DAGs, such as 1) the parametrization of DAGs via an expressive variational family, 2) a continuous relaxation that enables low-variance stochastic optimization, and 3) suitable priors over the latent variables.
We provide a series of experiments on real and synthetic data showing that BCD Nets outperform maximum-likelihood methods on standard causal discovery metrics such as structural Hamming distance in low data regimes. 
\end{abstract}
\section{Introduction}\label{sec:intro}
One of the key uses of statistical methods is learning causal relationships from observed data a.k.a. \textit{causal discovery}~\citep{peters2014causal}. Causal models allow us to forecast the effects of interventions and counterfactuals in several real-world domains, such as economic policy \citep{varian2016causal} and medicine \citep{shalitEstimatingIndividualTreatment2016}.
Although early approaches to statistical inference
emphasised that `correlation is not causation' \citep{fisher1958lung}, it has since been shown that for certain families of data-generating processes, it is indeed possible to infer causal relationships from purely observational data \citep{pearl2009causality, little2000causal}.
One such widely studied data-generating process is the linear-Gaussian structural equation model (SEM) \citep{pearl2009causality}, where the causal relationships between the random variables in the model can be represented via a weighted directed acyclic graph (DAG).
The value of any variable in the DAG of a linear-Gaussian SEM is given by a linear combination of the values of its parent nodes and additive noise.
For causal discovery, naive Monte-Carlo sampling or enumeration of the possible DAGs quickly becomes intractable, since the number of possible DAGs over a model grows super-exponentially with the number of variables \citep{friedman2003being}.
A variety of methods have been developed over the years to efficiently sample or optimize over DAGs \citep{cussens2012bayesian, teyssierOrderingbasedSearchSimple2005, spirtes1991algorithm, scanagatta2015learning}. 
For example, a recent line of work scales to high dimensions by maximizing the likelihood of the model (MLE)
over a set of continuous relaxations of adjacency matrices using gradient-based methods and specialized DAG regularization terms.
\citep{zhengDAGsNOTEARS2018, yuDAGGNNDAGStructure2019, ngRoleSparsityDAG2020, wei2020dags}.

However, the majority of the aforementioned works for scalable causal discovery in linear-Gaussian SEMs focus on recovering \emph{point estimates} for the underlying DAG via MLE.
In many practical scenarios, however, a point estimate 
fails to reflect the uncertainty in inferring the underlying DAG.
This includes scenarios where the true DAG is non-identifiable given (infinite) observational data as well as 
practical limitations due 
to an imperfect optimization algorithm, model mismatch, or simply a finite dataset.
In any of the above scenarios, it is desirable to obtain an explicit \emph{posterior distribution} over the unobserved DAG instead of a single point estimate~\citep{gelmanBayesianDataAnalysis2013}.
Causal inference is increasingly being applied in situations with important real-world consequences, where such a Bayesian estimation procedure could be useful to sample and reason over alternative generative mechanisms for observed data.

To achieve this goal, we propose \fullname{} (\name{}), an algorithmic framework for Bayesian causal discovery in linear-Gaussian SEMs based on modern variational inference ~\citep{blei2017variational}. 
Classical approaches to Bayesian causal discovery struggle in higher dimensions \citep{heckerman1999bayesian,fragoso2018bayesian}, with limited improvements via Monte Carlo approximations \citep{han2017efficient,viinikka2020towards}.
In order to scale our variational method to high dimensions, we address several design challenges.
First, we describe an expressive variational family of factorized posterior distributions over the SEM parameters (edge weights and noise variance) using deep neural networks.
The factorization exploits the decomposition of DAGs into triangular matrices and permutations for specifying the distribution over edge weights and node orderings respectively. 
Further, for low-variance stochastic optimization of the variational objective, we exploit recent advances in modeling and reparametrizing distributions over permutations via continuous relaxations \citep{menaLearningLatentPermutations2018, li2021discovering}.
Finally, we employ a horseshoe prior~\citep{carvalho2009handling} on the edge weights, which promotes sparsity.\footnote{Code is available at \url{github.com/ermongroup/BCD-Nets}}

We evaluate \name{} for causal discovery on a range of synthetic and real-world data benchmarks. 
Our experiments demonstrate that with finite datasets, there is considerable uncertainty in the inferred posterior over DAGs.
Using \name{}, we are able to effectively quantify the uncertainty and significantly outperform competing estimators~\citep{ngRoleSparsityDAG2020,viinikka2020towards} on the standard Structured Hamming Distance metric, especially in low data regimes.

\section{Preliminaries}\label{sec:setting-3}
\subsection{Linear-Gaussian Structural Equation Models}

A structural equation model (SEM) is a collection of random variables \(x_1, \ldots, x_d\) associated with a directed acyclic graph (DAG) \(G\) with \(d\) nodes~\citep{pearl2009causality}.
The SEM consists of a series of equations \(x_j = f_j(\operatorname{Pa}(x_j), \epsilon_j)\), where \(\operatorname{Pa}(x_j)\) gives the values of the parents of the \(j\)th node in \(G\) and \(\epsilon_j\)
is a noise variable.
For a linear-Gaussian SEM, the equations $f_j$ are linear and the noise $\epsilon_j$ is additive Gaussian. Considering \(X = [x_1, \ldots, x_d]\) as a vector in \(\R^d\) and \(W\) as the weighted adjacency matrix of \(G\),
we can see that \(X\) must satisfy \(X = W^\top X + \epsilon\),
where \(\epsilon \sim \mathcal{N}(0, \Sigma)\) is the additive noise vector and \(\Sigma = \diag\left\{\sigma_1^2, \ldots, \sigma_d^2 \right\}\) is a diagonal noise covariance matrix. 
We will denote the setting when all noise variances are equal i.e., \(\sigma_1=\ldots=\sigma_d=\sigma\) as an ``equal variance'' setting and ``non-equal variance'' otherwise.
Figure~\ref{fig:causal_graph} shows an illustration.
Linear SEMs have been used to model microarray data~\citep{peters2014identifiability} and protein pathways~\citep{eigenmann2017structure}, among many other systems.

\begin{figure}[t]
\centering

 \begin{tikzpicture}

  \node[circle] (A) at (-8,-2) [draw, minimum width=0.5cm,minimum height=0.5cm] {$x_1$};
  
  \node[circle] (B) at (-6.5,-2) [draw, minimum width=0.5cm,minimum height=0.5cm] {$x_2$};
  
  \node[circle] (C) at (-6.5,-0.5) [draw, minimum width=0.5cm,minimum height=0.5cm] {$x_3$};
  
   \node[circle] (D) at (-5, -2) [draw, minimum width=0.5cm,minimum height=0.5cm] {$x_4$};

   \matrix [matrix of math nodes,left delimiter=(,right delimiter=)](E)  at (-2.75, -1.5){ 
    x_1 \\
    x_2 \\
    x_3 \\
    x_4 \\
   };
   \node[] (G) at (-1.75, -1.5) {=};
   \matrix [matrix of math nodes,left delimiter=(,right delimiter=)](F)  at (0, -1.5){ 
    0 & 0 & 0 & 0 \\
    1.3 & 0 & 0 & 0 \\
    0.3 & 2.5 & 0 & 0 \\
    0 & 5.7 & 0 & 0 \\
   };
 \matrix [matrix of math nodes,left delimiter=(,right delimiter=)](H)  at (2.25, -1.5){ 
    x_1 \\
    x_2 \\
    x_3 \\
    x_4 \\
   };
    \node[] (I) at (3.125, -1.5) {+};
    \matrix [matrix of math nodes,left delimiter=(,right delimiter=)](I)  at (4, -1.5){ 
    \epsilon_1 \\
    \epsilon_2 \\
    \epsilon_3 \\
    \epsilon_4 \\
   };
   
   % \node [] (N) at (3.5, -3) {$\epsilon \sim \mathcal{N}(0, \Sigma)$.}; 
%   \node[] (E) at (0, -2) [draw] {
%   \begin{equation*}
%      A =  \begin{bmatrix}
% a & b & c \\
% d & e & f \\
% g & h & i
% \end{bmatrix}
%   \end{equation*}
%   };
   
    \foreach \from/\to in {A/B, B/C, B/D, A/C}
\draw [->] (\from) -- (\to);

%  \node[circle] (d) at (1,1) [draw, minimum width=0.5cm,minimum height=0.5cm] {$\mathbf{z}$};
%  \node[draw=none,fill=none] (e) at (2,0) {$\theta$};
%  \node[rectangle] (f) at (2,1) [draw, minimum width=0.5cm,minimum height=0.5cm] {$f$};
%  \foreach \from/\to in {c/d, d/f, e/f}
% \draw [->] (\from) -- (\to);
% \draw [->] (c) to [out=60,in=120] (f);
  \end{tikzpicture}
  \caption{A Structural Equation Model (SEM). \textbf{Left:} DAG with 4 nodes. \textbf{Right:} Linear-Gaussian SEM (with \(\epsilon \sim \mathcal{N}(0, \Sigma)\))
%  \kristy{maybe just to be extra clear you can say $x_2$ and $x_1$ are parents of $x_3$, etc, but feel free to ignore}
 }
\label{fig:causal_graph}
% \vspace{-0.15in}
\end{figure}
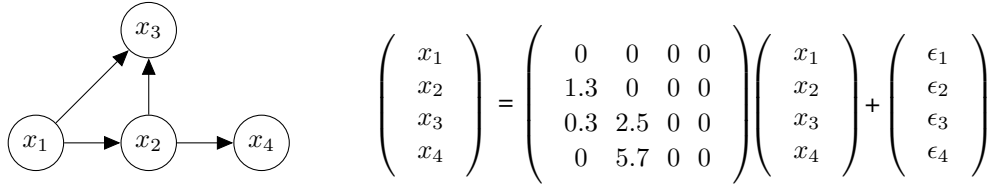

\subsection{Causal Discovery}
Given some dataset $X^n_1 = \{X_1, X_2, \dots, X_n\}$ drawn i.i.d.\ from a linear-Gaussian SEM, we are interested in inferring
the adjacency matrix $W$ and the diagonal entries of the covariance matrix $\Sigma$. 

\paragraph{Maximum Likelihood Estimation (MLE)}
This approach obtains a point estimate for $W$ and $\Sigma$ by maximizing the likelihood of the dataset \(X^n_1\).
Using the fact that $W$ is constrained to be a DAG, it can be shown that \(I - W^\top\) is always invertible.\footnote{Writing \(W = PLP^\top\) from Section~\ref{sec:method}, \(I - W^\top = (I - PL^\top P^\top) = P(I - L^\top)P^\top\). Now $(I - L^\top)$ is upper-triangular with unit diagonal, so $\det(I - L^\top) = 1$.  Thus $(I - W^\top)$ has unit determinant, so is invertible.
}.
Hence, we can rearrange to obtain \(X = {(I - W^\top)}^{-1}\epsilon\), obtaining \(X\) as the product of a matrix and a Gaussian vector.
It follows that \(X\) itself has a Gaussian distribution, \(X \sim \mathcal{N}(0, \Theta^{-1})\), with precision matrix \(\Theta = (I - W)\Sigma^{-1}{(I - W)}^\top\).
This gives a joint log-likelihood over the dataset \(X^n_1\) as
\begin{align}
  \label{eq:3}
  \log p(X^n_1 ; \Sigma, W) = \frac{n}{2}\left(\log \det \Theta - d \log (2\pi)\right)- \frac12 \sum_{i=1}^n X_{i} ^\top \Theta X_i,
\end{align}
where \(X_i\) runs over the \(n\) points in the dataset and \(\Theta\) is given above. 
The space of DAG adjacency matrices  $W$ is characterised by the acyclicity constraint. When viewed as an subset of the space of all adjacency matrices (\(\R^{d\times d}\)), this is a piecewise linear manifold with a number of facets that grows super-exponentially in the dimension (roughly as \(d!2^{d^2/2}\)), rendering any approach based on enumeration over all DAGs intractable.
Many approaches have been proposed to scale causal discovery via MLE to high-dimensional data.
This includes recent approaches
which relax the DAG constraint to a larger, continuous set of adjacency matrices, such as \(\R^{d\times d}\) in~\cite{zhengDAGsNOTEARS2018} or the Birkhoff polytope of doubly-stochastic matrices in~\cite{birdal2019probabilistic}, which enables the use of gradient-based optimization.

However, there are a number of challenges with using point estimators for causal discovery.
Fundamentally, the true SEM parameters are identifiable from observational data only under specific conditions.
In fact, the map between the the data distribution parameters \(\Theta\) and the SEM parameters \(\left\{W, \Sigma\right\}\) may not be bijective even in the limit of infinite data and an oracle optimizer.
Theoretical results on identifiability in linear-Gaussian SEMs currently hold only under restricted settings; notably, this only includes 
the equal variance setting~\citep{peters2014identifiability}.
Further, even if the noise variances are equal in the ground-truth SEM, we are limited in practice by the finite size of the dataset and MLE could still converge to an incorrect solution. 
Finally, such point estimators are typically not robust to potential misspecifications in the likelihood model, e.g., a non-Gaussian noise model, non-linear cause-effect relationships, etc.
For the above scenarios, it is useful to characterize the uncertainty in estimating the SEM parameters to guide downstream analysis. For example, in medical applications giving a distribution of possible causal pathways, with quantified uncertainty, could be much more useful than a point estimate of the most likely pathway.  

\paragraph{Bayesian Estimation.} In contrast to point estimators, Bayesian methods explicitly characterize the uncertainty in the estimated parameters~\citep{gelmanBayesianDataAnalysis2013}.
That is, we treat the unknown SEM parameters \(\left\{W, \Sigma\right\}\) as random variables associated with a prior distribution \(p(W, \Sigma)\).
The likelihood model $p(X^n_1 \vert \Sigma, W)$ follows the same expression as the RHS in equation~\eqref{eq:3}.
Given the prior and the likelihood, we obtain a posterior distribution \(p(W, \Sigma \vert X_1^n)\) over \(\left\{W, \Sigma\right\}\) via Bayes rule, which quantifies the uncertainty in estimating the SEM parameters. 
As long as the likelihood model is well-specified and the prior includes the ground-truth SEM parameters in its support, as the dataset size increases the posterior will concentrate around a set of SEM parameters. In the equal variance case these will be the true SEM parameters. In the non-equal variance case the posterior
will concentrate on the set of DAG parameters quasi-equivalent to the ground truth as defined in~\cite{ngRoleSparsityDAG2020}. These are the set of DAG parameters which generate data with the same covariance as the ground truth, and are hence indistinguishable based on data alone.
We discuss this further in the appendix, Section~\ref{sec:recov-ground-truth}. 

The key challenge in Bayesian estimation is tractable computation of the posterior distribution in high-dimensional spaces. With the exception of specific prior and likelihood families e.g., conjugate distributions, computing the posterior is typically intractable.
In the next section, we present a variational framework for scalable approximation of the posterior for Bayesian causal discovery.
\section{Causal Discovery via \fullname{}}\label{sec:method}
As discussed previously, we are interested in learning the posterior distribution \(p(W, \Sigma \mid X_1^n)\) over the unknown SEM parameters \(\left\{{{ W}, { \Sigma}}\right\}\) given an observed dataset \(X_1^n\).
Unlike point estimators, such a posterior distribution will allow us to quantify the uncertainty in estimation. Our framework, \fullname{} (\name{}) allows us to tractably estimate this posterior.

As a first step, our approach involves parametrizing the adjacency matrix \(W\) as the product of a permutation matrix \(P\) and a strictly lower-triangular matrix \(L\), so that
  \(W = PLP^\top\).
  In graphical terms, \(L\) is a weight matrix for a canonical DAG with a fixed ordering, while pre- and post-multiplication by \(P\) and \(P^\top\) modifies the ordering of nodes.
\(L\) is parameterised by a vector of weights \(l \in \R^{d(d-1)/2}\), and the constraint that \(P\) is a permutation ensures that \(W\) is the adjacency matrix of a DAG.

Our goal is to obtain the posterior distribution \(p(P, L, \Sigma|X_1^n)\). Due to the intractable partition function, we cannot directly compute the posterior. We turn to variational inference 
to deliver a tractable approximation to the posterior~\citep{jaakkola2000bayesian}.
The key idea here is to cast inference as an optimization problem, wherein we approximate the true posterior with a tractable family of distributions $q_\phi(P, L, \Sigma)$ parameterized by $\phi$ and optimize these parameters $\phi$ to minimize the KL divergence between the approximate and true posterior distributions:
\begin{align}
  &\dkl \left(q_\phi(P, L, \Sigma)\| p(P, L, \Sigma\mid X_1^n)\right) \nonumber  \\
                                    & = -\underbrace{
                                    \Ex{(P, L, \Sigma)\sim q_\phi}
                                    {\log p(X_1^n|P, L, \Sigma) - \log \frac{q_\phi(P, L, \Sigma)}{p(P, L, \Sigma)}}
                                    }_{\text{ELBO}(\phi)} \; + \log p(X_1^n).
\end{align}
Hence, minimizing the KL divergence above corresponds to maximizing the evidence lower bound (ELBO) w.r.t.\ variational parameters $\phi$.
With
a sufficiently expressive variational family from which to choose \(q\), maximizing the ELBO recovers the true posterior as \(q_\phi(P, L, \Sigma) = p(P, L, \Sigma|X_1^n)\).

In practice, we face  important  
modeling choices which have a substantial impact on the quality of the posterior obtained, as well as the difficulty of optimizing the ELBO.
These include the variable ordering to use when factorizing  \(q_\phi(P, L, \Sigma)\) using the chain rule, choice  of variational family for the individual (conditional) factors, as well as the prior distribution \(p(P, L, \Sigma)\).
We discuss these algorithmic design choices next.
\subsection{Factorization of Approximate Posterior}
Approaches to variational inference with multiple sets of latent variables often use a mean-field factorization \citep{jaakkola2000bayesian}, in our case
corresponding to \(q_\phi(P, L, \Sigma) = q_{\phi}(P)q_{\phi}(L)q_{\phi}(\Sigma)\). This mean-field approach can often simplify the
optimization of the ELBO, but severely limits the expressiveness of the approximate posterior. 
For example, consider a two-dimensional linear-Gaussian SEM with non-equal variances (and therefore has non-uniquely identifiable parameters).
Under infinite data, the posterior density concentrates on the two (observationally undistinguishable) maximum-likelihood solutions: an edge \(x_1 \to x_2\) with some weight \(l_1\)and an edge \(x_1 \leftarrow x_2\) with another weight \(l_2\). The posterior concentrates to a bimodal distribution, with density around the region \((l_1, P_1)\) and around \((l_2, P_2)\). A mean-field factored posterior \(q_\phi(L)q_\phi(P)\) cannot represent this correlated density. 
Empirically we observe that such a factored posterior leads to a worse ELBO, illustrated in ablation experiments in Section~\ref{sec:ablation}.

For \name{}, we use a factorization \(q_\phi(P, L, \Sigma) = q_\phi(P|L, \Sigma)q_\phi(L, \Sigma)\), sampling \(L\) and \(\Sigma\) jointly first, then conditionally sampling \(P\) based on these values, using a neural network to learn the parameters of the conditional distribution \(q_\phi(P|L, \Sigma)\). This leads to an ELBO
\begin{align}
\label{eq:factorized_order}
  \mathbb{E}_{(L, \Sigma) \sim q_\phi}\left[\Ex{P \sim q_\phi(\cdot | L, \Sigma)}{\log p(X_1^n|P, L, \Sigma) - \log \frac{q_\phi(P|L, \Sigma)}{p(P|L, \Sigma)}} - \log \frac{q_\phi(L, \Sigma)}{p(L, \Sigma)}\right] 
\end{align}

\subsection{Variational Families}
An important design choice in variational methods is the variational family used. A distribution \(q_\phi\) over latents $z$ used in ELBO optimization must support two operations:
drawing a sample \(z \sim q_\phi\), and computing \(\log q_\phi(z) - \log p(z)\). Depending on the prior distribution \(p\), it may be additionally possible to compute the term
\(\Ex{z \sim q_\phi}{\log q_\phi(z) - \log p(z)} = \dkl\left[q_\phi, p\right]\) in closed form, possibly reducing variance compared to Monte Carlo estimates~\citep{roederStickingLandingSimple2017}.
It is also desirable that sampling \(z \sim q_\phi\) can be written as \(g_\phi(\gamma), \gamma \sim q_0\), i.e.\ that \(z\) is obtained by sampling from a fixed distribution $q_0$ and
transformed through a parameterized, differentiable sampling path $g_\phi$. This lets us use pathwise gradient estimators~\citep{mohamedMonteCarloGradient2019}, which typically have lower variance
than the score-function  alternatives~\citep{williams1992simple}. 

\subsubsection{Distribution over Weights \& Noise Variances}\label{sec:l_var_family}
We consider two different variational families for the distribution over weights and noise variances, \(q_{\phi}(L, \Sigma)\), depending on the modeling assumptions over the noise. 
Under the equal variance modeling assumption, we parameterize our variational family as a (diagonal covariance) normal distribution, with \(\phi\) directly encoding the mean and variance of the \(d(d-1)/2\) random variables for \(L\) and the single random variable for \(\sigma\). 
We use this simple distribution since we expect the posterior over \(L\) to be relatively unimodal in the identifiable equal variance case.

In the non-equal variance case, and in the experiments using real-world data, we use a normalizing flow \citep{rezende2015variational} for \(q_\phi(L, \Sigma)\). We expect that in this case the distribution over \(L\) could be much more complicated, and so a more expressive density model is desirable. We use continuously indexed normalizing flows \citep{cornish2020relaxing}, a recently-developed family of flows offering good performance on multimodal densities.
As desired, both the normal distribution and flows have pathwise gradient estimators. 
\subsubsection{Distribution over permutations}\label{sec:perm_relax}
Since the set of $d$-dimensional permutation matrices \(\mathcal{P}_d\)  is discrete and its size scales combinatorially with $d$, it is challenging to specify a variational family of distributions over $P \in \mathcal{P}_d$ that permits both density estimation and sampling for low-variance stochastic optimization of the ELBO objective in equation~\eqref{eq:factorized_order}.
Since permutations are discrete, pathwise gradient estimators do not exist.
Hence, we  consider relaxations to distributions over permutations.
Our base distribution is the Boltzmann distribution over \(\mathcal{P}_d\), parametrised by \(T \in \R^{d\times d}\) with probability \(P_T(P) \propto \exp \langle T, P \rangle\) for \(P \in \mathcal{P}_d\).

\textbf{Density estimation.} Computing the partition function for the Boltzmann distribution, \(\sum_{P \in \mathcal{P}_d} P_T(P)\) involves an expensive enumeration and is therefore intractable to evaluate in high dimensions.
In order to approximate the partition function, we follow~\citep{li2021discovering} in noting that the partition function is equal to the matrix permanent \(\perm (\exp T)\). This can in turn be approximated tractably via  the Bethe permanent, denoted as \(\perm_B(\exp T)\).
 The Bethe permanent is known to satisfy \(\log \perm T - \frac{d}{2}\log 2 \leq \log \perm_B T \leq \log \perm T\), so that the density will be over-estimated, by no more than a factor of \(\frac{d}{2}\log 2\)~\citep{anari2019tight}. 
We refer the reader to appendix C in~\cite{mena2020sinkhorn} for an efficient implementation of the Bethe permanent estimator based on message passing.

\textbf{Pathwise Gradient Estimation.} Exact sampling from the Boltzmann distribution is challenging for similar reasons as exact density estimation. 
Moreover, even tractable low-rank approximations to the Boltzmann distribution based on Gumbel-Matching distributions~\citep{tomczak2016some} are not useful as they involve non-differentiable operations and so cannot be used to derive a pathwise gradient estimator.
Instead, we use a relaxation to the Gumbel-Matching distribution, the Gumbel-Sinkhorn distribution~\citep{menaLearningLatentPermutations2018}.

To draw a sample from the Gumbel-Sinkhorn distribution with parameters $T$, we calculate \(S((T + \gamma)/\tau)\), with \(S\) the Sinkhorn operator~\citep{sinkhorn1964relationship}, \(\gamma\) a matrix of i.i.d standard Gumbel noise and \(\tau\) a temperature hyperparameter. \(S(T)\) returns the fixed point obtained from  repeated row and column normalization, starting from the elementwise \(\exp\) of \(T\). In the limit of an infinite number of iterations, this returns a doubly stochastic matrix. As \(\tau\) approaches zero, the samples approach samples from \(\mathcal{P}_d\), with a distribution given by the Gumbel-Matching distribution. A proof of this fact is given in the appendix of~\cite{menaLearningLatentPermutations2018}. As the Sinkhorn algorithm is a differentiable function of standard Gumbel noise, we can use a pathwise gradient estimator of gradients involving samples from the Gumbel-Sinkhorn distribution. Additional implementation details for our Sinkhorn approach are in the appendix, Section~\ref{sec:addit-exper-deta}.
\subsection{Prior Distributions}
A key aspect of any probabilistic model is the choice of prior distribution for the unknown parameters. The prior incorporates domain knowledge into the problem. 
Moreover, specific choices of prior can be computationally friendly.

\textbf{Gaussian Prior.}
We show in the appendix (Section~\ref{sec:marginalization-proof}) that if we choose the prior over edge weights to be an isotropic Gaussian, we can analytically marginalize out the weights, only requiring the distribution over \(P\) to characterise the full posterior.
Once we have \(P\), it is straightforward to obtain \(L\), since it is a regression problem which can be solved tractably~\citep{tibshirani1996regression}. Although it is very convenient to avoid modeling a distribution over \(L\),
in practice we find that the Gaussian generative assumption on the weights is not particularly useful for datasets which we would like to analyse, for which the underlying DAGs are typically
sparse. A sample from the distribution over DAGs with Gaussian edge weights will likely have many large- or moderate-weight edges. 

\textbf{Laplace Prior.} Previous work finding the maximum-likelihood solution of equation~\eqref{eq:3} has added a term \(\lambda \|W\|_1\)
penalizing the \(L_1\) norm of the adjacency matrix \citep{ngRoleSparsityDAG2020, zhengDAGsNOTEARS2018}, which can be interpreted as imposing an isotropic Laplace prior. The Laplace prior is known to induce sparsity in the posterior \citep{tibshirani1996regression}, but there is generally no way to choose the regularization coefficient \(\lambda\) without cross-validation. 

\textbf{Horseshoe Prior.} Given the limitations of the above choices of priors, we instead propose to use a horseshoe prior on \(L\). 
The horseshoe prior has a sharp peak at zero and relatively flat tails which tend to induce sparsity in the posterior while not significantly penalizing larger coefficients \citep{carvalho2009handling}. Mathematically, a variable \(\beta_i\) has a horseshoe distribution if it is the result of first drawing a random variable \(\lambda_i \sim C^+(0, 1)\) from a half-Cauchy distribution, then sampling \(\beta_i \sim \mathcal{N}(0, \lambda_i^2\eta^2)\). 
The parameter \(\eta\) can be adjusted to encode the prior belief on the degree of the DAG generating the data. 
Based roughly on~\cite{piironen2017hyperprior}, we suggest a rule of thumb of setting \(\eta \approx \rho/(d\sqrt{n})\),
with \(\rho\) the prior belief of the average degree of the DAG. This results in a sparsity prior that penalises more stringently with more data, similarly to the BIC score~\cite{schwarz1978estimating}. 
\subsection{Overall Approach}
Incorporating all the above design choices, we obtain our overall algorithm for \name{}. The pseudocode is shown in Algorithm~\ref{alg:liga-vi}. 
Here, \(q_{\phi}(L, \Sigma)\) is parameterized as either a  normal distribution or a normalizing flow depending on the modeling assumption of equal or non-equal variances.
The distribution \(q_{\phi}(P|L, \Sigma)\) is a Gumbel-Sinkhorn relaxation of the Gumbel-Matching distribution over permutations, parameterized by a function \(h_\phi\) conditioned on \(L, \Sigma\). For the function \(h_\phi\), we use a simple two-layer multi-layer perceptron. 
For stochastic optimization w.r.t.\ this distribution, we additionally find it useful to use the straight-through gradient estimator \citep{bengio2013estimating}. This means that on the forward pass of the backpropogation algorithm, we obtain the \(\tau \to 0\) limiting value of the Sinkhorn relaxation using the Hungarian algorithm \citep{kuhn1955hungarian}, giving a hard permutation $P$. On the backward pass the gradients are taken with respect to the finite-\(\tau\) doubly-stochastic matrix ${\tilde P}$.
The prior is given by \(p(P, L, \Sigma) = p(P)p(L)p(\Sigma)\) where \(p(L)\) is a horseshoe prior, \(p(P)\) is a uniform prior over permutations and \(p(\Sigma)\) is
a relatively uninformative Gaussian prior on \(\log \Sigma\).

\begin{algorithm}[th]
  \SetAlgoLined{}
  \DontPrintSemicolon{}
    \SetKwInOut{Input}{Input}\SetKwInOut{Output}{output}
    \Input{data \(X_1^n\), Gradient-based optimizer \texttt{step}, temperature hyperparameter $\tau$ \;}
    Initialize parameterized distribution \(q_{\phi}\),
    neural network \(h_{\phi}(L, \Sigma)\)\;
    \While{not converged}{
      Draw \(L, \Sigma \sim q_{\phi}(L, \Sigma)\)\;
      Compute logits \(T = h_\phi(L, \Sigma)\) \;
      Draw \(\gamma \in \R^{d \times d}\) i.i.d from standard Gumbel\;
      Compute soft \({\tilde P} = S((T + \gamma)/\tau)\), hard \(P = \operatorname{Hungarian}({\tilde P})\) \;
      Compute \(g = \nabla_{\phi}\left[\operatorname{ELBO}(\phi)\right]\) from equation~\eqref{eq:factorized_order} with sampled \(P, L, \Sigma\), using \(P\) in the forward pass and \({\tilde P}\) in the backward pass\;
      Update $\phi$ via \texttt{step} using gradient \(g\)\;
    }
    \caption{\fullname{} (\name{}) }\label{alg:liga-vi}
  \end{algorithm}   
\section{Related Work}\label{sec:related-work}
\textbf{Structure Learning for Bayesian Networks}: The field of Bayesian structure learning investigates how to infer the structure of Bayesian networks from data.
Learning the structure of graphical models from data is known to be NP-hard \citep{chickering1996learning}. Nevertheless, two main families of approaches have been developed to tackle this problem. 
The first are constraint-based approaches \citep{spirtesCausationPredictionSearch1993}, aiming to infer conditional independences from the data and learn a structure consistent with these independences. 

The second family of approaches are score-based methods \citep{malone2017asl, koller2009probabilistic}, which assign a score to a proposed DAG based on the goodness-of-fit to the data, and search over the space of DAGs to find a structure which maximizes the score. The combinatorial size and discrete nature of the set of possible DAGs in \(d\) dimensions present the main difficulties to this approach. Several competing scores have been proposed \citep{schwarz1978estimating, akaike1998information}, which generally have an either implicit or explicit term penalizing complexity to avoid overfitting to the data.
Recent work has focussed on developing approximate algorithms that can return a high-scoring graph with probabilistic guarantees \citep{nie2016learning} and on exact methods for graphs under structural assumptions, such as an upper bound on the number of parents of a node \citep{cussensBayesianNetworkStructure2017, scanagatta2015learning}, similar to the classic Chow-Liu algorithm \citep{chow1968approximating}. Using these techniques it is possible to search over all graphs where nodes have at most two parents for dimension \(d > 1000\) \citep{sheehan2014improved}. 

Another set of works improve sampling from distributions over DAGs, e.g.\ approaches sampling over orderings instead of over DAGs~\citep{teyssierOrderingbasedSearchSimple2005}.
Given a particular network structure, there are several approaches to learning DAG edge weights, even in the presence of latent variables~\citep{merkle2018blavaan}.

\textbf{Continuous Relaxations for Structure Learning}: The problem of learning the structure of probabilistic graphical models on \emph{undirected} graphs can be formulated as a convex optimization problem \citep{friedman2008sparse}, which allows fast inference even for large graphs. However, the acyclicity constraint for learning directed graphs means continuous optimization cannot be applied directly. 
In recent years, relaxation-based approaches to structure learning have emerged, based on the idea of learning the adjacency matrix corresponding to a DAG via optimization over a matrix \(W \in \R^{d\times d}\), instead of in the set of adjacency matrices corresponding to DAGs. This approach was introduced in \citet{zhengDAGsNOTEARS2018}, with the least-squares loss \(\|X - W^\top X\|_F^2\) used to measure the fit of the data to \(W\). To encourage \(W\) to approximately form an adjacency matrix, a penalty \(h(W)\) was added to the loss, where \(h(W) = 0\) only if \(W\) is a DAG.\ Originally the penalty \(h(W) = \operatorname{Tr}e^{W \odot W}\) was used, and subsequently different penalties have been proposed \citep{yuDAGGNNDAGStructure2019}.

Particularly of interest to us is GOLEM~\citep{ngRoleSparsityDAG2020}, which specifically investigates relaxed optimization approaches for the linear-Gaussian SEM that we consider. They point out the similarity of the squared loss to the log-likelihood in equation~\eqref{eq:3}, and show that the additional term \(\log \det (I - W)\) also serves as a penalty for non-DAG \(W\). This allows optimization over \(W\) to maximize the likelihood without the augmented Lagrangian optimization procedure used in \citet{zhengDAGsNOTEARS2018}.

\textbf{Learned Orderings}: There are strong connections between our SEM setting and the problem of learning an ordering for autoregressive models. An SEM is similar to an autoregressive model~\citep{akaike1969fitting} in the sense that the value at the \(i\)th index depends on (possibly) all of the values before \(i\). It has been observed that certain orderings for autoregressive models can lead to higher likelihood~\citep{germain2015made}, and subsequent work~\citep{jainLocallyMaskedConvolution2020, li2021discovering} has developed methods to find the orderings which lead to the best performance from the autoregressive model. The closest to our work is~\citet{li2021discovering}, which uses latent permutations to learn an ordering for an autoregressive model. However, this approach doesn't allow reparameterised gradients due to the hard assignment for the autoregressive order, and the REINFORCE~\citep{williams1992simple} gradient estimator is used instead of lower-variance reparameterised gradients.% \citep{mohamedMonteCarloGradient2019}. 

\section{Experiments}\label{sec:experiments}
In this section we study the empirical performance of our method.
On synthetic data, we show that our distributional approach outperforms baselines, including the maximum likelihood approach, in the low-data regime in terms of edges identified correctly.
On a real-word protein dataset \citep{sachsCausalProteinsignalingNetworks2005}, we also identify more edges correctly compared to the maximum-likelihood method.
On a toy causal inference problem, we outperform competing methods. 
Finally, we carry out an ablation with various parts of our algorithm, showing the degradation in performance if we remove key parts. In all cases we use the structural Hamming distance (SHD)~\citep{tsamardinos2006max} to quantify how close a certain DAG is to another. The SHD measures how many insertions, deletions or flips are required to turn one graph into another. We also use the CPDAG SHD (SHD-C)~\citep{tsamardinos2006max} which computes SHD up to the equivalence class of completed partially directed acyclic graphs (CPDAG)s~\citep{richardson2002ancestral}. We give additional experimental details in the appendix~\ref{sec:addit-exper-deta}. In every case when we give the SHD produced by our model, this represents the SHD marginalized over our posterior, obtained by taking 100 samples of \(W\) from our converged posterior, computing the SHD and taking the average of these SHDs. Error bars are given by 5 random seeds for the data generation or choice of data subset. 
\begin{figure*}
  \centering
  \includegraphics[width=0.92\textwidth]{./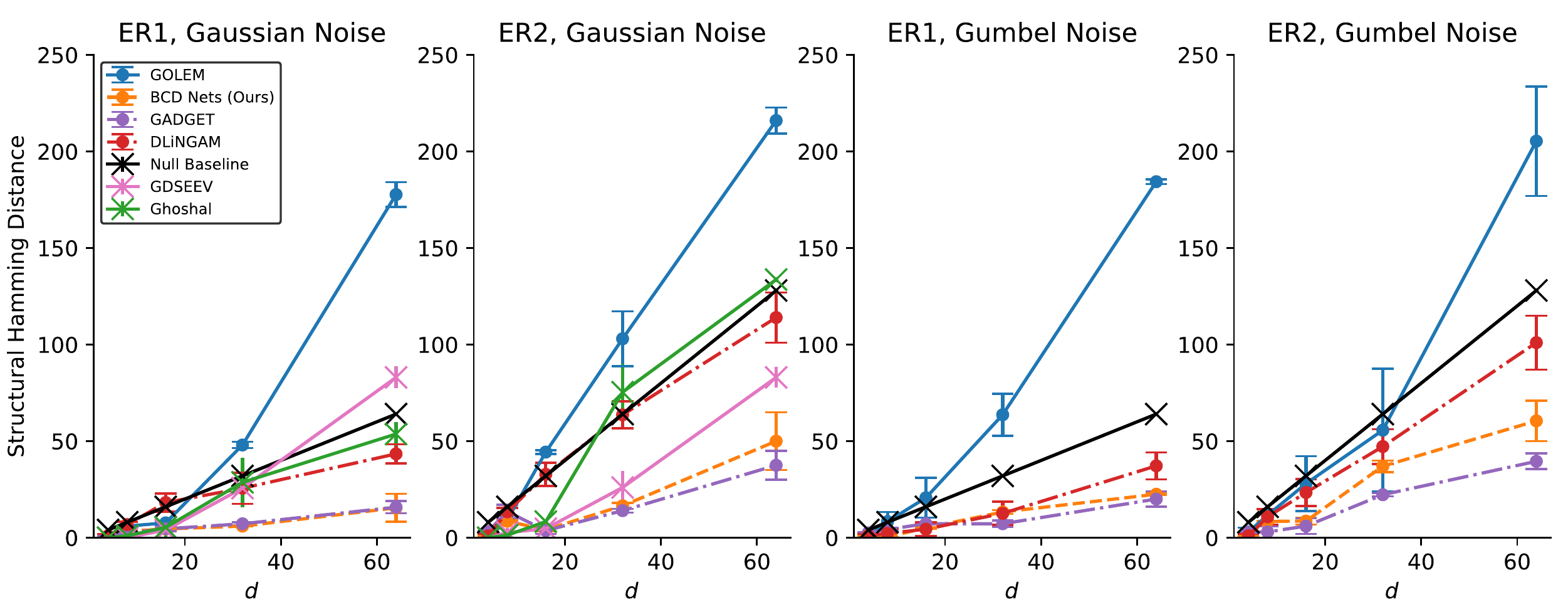}
  \caption{Distance between true graph and that estimated from \name{}, our variational approach, compared to several baselines described in the main text. \textit{Lower is better.}
}\label{fig:synthetic-ev}
\end{figure*}
\vspace{-0.3cm}
\subsection{Synthetic Data}\label{sec:comp-maxim-likel}
We study our model's performance in the low-data regime, with \(n=100\) data points. We expect that when there is little data, the maximum-likelihood DAG may not be particularly representative of the DAGs that could have plausibly made the data, and so a Bayesian approach which can assign posterior density to multiple candidates may correctly identify more edges.
In this experiment, we draw Erdős-Renyi graphs~\citep{erdHos1960evolution} with average degree equal to 1 or 2, denoted ER1 and ER2. The weights are generated uniformly in \(\left\{-2, -0.5\right\}\cup \left\{0.5, 2\right\}\) following~\cite{ngRoleSparsityDAG2020}.
The data is generated in the equal-variance case so that the ground-truth is identifiable and the SHD is meaningful. We evaluate with Gaussian noise as well as with Gumbel noise to test how our method behaves under a misspecified likelihood. The distribution over the weights also induces some misspecification, since the weights are not drawn from the modelled prior of a horseshoe.

We compare against GOLEM~\citep{ngRoleSparsityDAG2020}, which attempts to find the maximum-likelihood solution of equation~\eqref{eq:3} using the continuous relaxation introduced in~\citep{zhengDAGsNOTEARS2018}. We found that the hyperparameters given by the authors did not lead to optimal performance with low data, so we found the best ones via a cross-validation scheme given in appendix \ref{sec:addit-exper-deta}. We also include results from GADGET \citep{viinikka2020towards},  a recent highly-optimized state-of-the-art Monte Carlo method for Bayesian causal discovery for DAGs, using 300,000 iterations with 48 parallel chains. To compare against a baseline that is widely used in practice, we also evaluate the DirectLiNGAM method~\cite{shimizu2011directlingam}, designed for non-Gaussian likelihoods. Also included are two methods designed specifically for Gaussian DAGs, the greedy directed acyclic graph search with equal error variance (GDSEEV) method \cite{peters2014identifiability}, and the main method (`Ghoshal') from \cite{ghoshalLearningLinearStructural2018}. For the latter, we were unable to find an implementation by the authors and used our own implementation. 
We use the equal-variance formulations of GOLEM and \name{}.

Results are shown in figure~\ref{fig:synthetic-ev}. The null baseline is a method that simply predicts no edges.
We observe that we reliably obtain significantly lower SHD than MLE approaches, especially at higher \(d\) and degree where we tend to recover a significant fraction of the edges.
For example, in the Gaussian noise case with average degree 2, for dimension 64 we get an SHD of \(59 \pm 14\) compared to GOLEM's \(205 \pm 30\).
This corresponds to double the true positive rate while having a false discovery rate one-third of GOLEM's.
We give full plots of the true positive rate, false positive rate and false discovery rates in the appendix, Section~\ref{sec:addit-pred-stat}.
The Bayesian GADGET method also obtains better results than MLE methods, with BCD Nets' match to this exact sampling method suggesting that our approach finds a variational posterior close to the real posterior.
Compared to GADGET, our method is generally faster (see table \ref{tab:3}) and gives an explicit parametric form for the posterior, instead of simply giving samples. Furthermore, the ELBO is a natural metric to evaluate convergence, while for GADGET it is unclear how many iterations are required to mix\footnote{An earlier version of this paper reported worse SHDs for GADGET after using 50,000 steps. Increasing the number of iterations by 6 times resulted in much better mixing}. DirectLiNGAM is competitive on the Gumbel likelihood, fitting its linear non-Gaussian framework.
\subsection{Protein Dataset}
We also evaluate on a benchmark protein signalling dataset~\cite{sachsCausalProteinsignalingNetworks2005}. The \(d=11\)-dimensional dataset consists of \(n=853\) observations, with an expert-provided ground-truth graph. The true structure has 17 edges. We train on random draws of 100 observations.
The results are shown in table~\ref{tab:sachs}. For GOLEM, both the equal-variance and non-equal variance maximum-likelihood methods perform worse than predicting no edges at all. Our equal variance method does not perform much better.
Our non-equal variance method performs much better than maximum likelihood, obtaining an SHD of 15 from only \(100\) samples, close to the SHD of \(14\) obtained by GOLEM-NV when using the entire \(853\) data points (reported in \citet{ngRoleSparsityDAG2020}).
NOTEARS, from \cite{zhengDAGsNOTEARS2018} performs better than GOLEM. Meanwhile, the score-based GES method also achieves an SHD-C of 14, showing that our method approaches the performance of methods which enumerate all graphs (but scale badly). The non-probabilistic DLiNGAM method achieves similar SHD, although has the lowest SHD-C, indicating it gets some edges correct but the direction wrong. Finally, GADGET~\citep{viinikka2020towards} achieves a slightly lower SHD of 13.9, within error of our approach and GES.
\begin{wraptable}{R}{0.5\linewidth}
  \vspace{-0.65cm}
  \centering
\caption{Causal discovery approaches
  on the protein dataset with reduced data (\(n=100\))}\label{tab:sachs}
{\small
\begin{tabular}{llll}
           & \# Edges & SHD & SHD-C \\ \toprule
GOLEM-EV   &   1.5 \(\pm\) 1.3  & 18.5 \(\pm\) 1.3  &  18.5 \(\pm\) 1.3     \\
GOLEM-NV   &   1.5 \(\pm\) 1.3  & 18.5 \(\pm\) 1.3  &  18.5 \(\pm\) 1.3     \\
  NOTEARS    &   18.5 \(\pm\) 0.8 &  16.5 \(\pm\) 0.9 & 17 \(\pm\) 1.0\\
  GES  & 12 \(\pm 0.9\) & --- &  14.6 \(\pm\) 2.0\\
           PC & 4.6 \(\pm\) 0.5 & --- & 14.0 \(\pm\) 0.6\\
  GADGET & 4.7 \(\pm\) 0.5 & 13.9 \(\pm\) 1.2 & 13.7 \(\pm\) 0.6\\
  DLiNGAM & 4.6 \(\pm\) 0.5 & 14.8 \(\pm\) 1.0 & 12.4\(\pm\) 0.5\\
\name{}-EV &   11.3 \(\pm\) 1.2  & 19.5 \(\pm\) 0.3    & 19.4 \(\pm\) 0.1      \\
\name{}-NV &   9.2 \(\pm\) 2.0    & 14.7 \(\pm\) 0.9   & 14.0 \(\pm\) 1.0
\end{tabular}
}
\vspace{-0.25cm}
\end{wraptable}
\subsection{Causal Inference}\label{sec:causal-inference}
To illustrate how our distributional approach could be used for causal inference, we test the ability of our method to make interventional predictions. In this experiment we generate a synthetic ER graph of degree 1 as in Section \ref{sec:comp-maxim-likel}. We then choose a random edge in the graph between nodes \(i, j\), with \(x_i \to x_j\). Using the ground-truth parameters \(W^*, \Sigma^*\), we choose a random value \(a\) and sample \(x_j\sim \operatorname{do}(x_i=a|W^*, \Sigma^*)\), by directly intervening in the data-generating process. For an estimated \({\hat W}, {\hat \Sigma}\) we can also sample \(x_j\sim \operatorname{do}(x_i=a|{\hat W}, {\hat \Sigma})\). For our distributional approach, we marginalize over the final posterior distribution of parameters \(q_\phi(W, \Sigma)\), drawing \(x_j\) from the distribution with probability \(\Ex{W, \Sigma \sim q_\phi}{P(\operatorname{do}(x_i=a|W, \Sigma))}\). We then compare the sampled empirical distribution of \(x_j\) to the ground truth interventional distribution. Our marginalization over the posterior allows us to get significantly closer to the true interventional distribution, measured by the Wasserstein distance (e.g. 0.25 for the linear-Gaussian graph at \(d=64\), compared to 2.8 for GOLEM). Full results as a function of \(d\) are given in appendix ~\ref{sec:caus-infer-exper}.

\subsection{Model Running Time}
We expect the time taken to train our model will asymptotically scale as \(\mathcal{O}(d^3)\), similarly to maximum-likelihood methods~\cite{ngRoleSparsityDAG2020, zhengDAGsNOTEARS2018}. In table~\ref{tab:3} we give the time taken to train to convergence as we vary the dimension \(d\). The time taken for GOLEM includes the time required to choose the sparsity parameter \(\lambda\) via cross-validation. \name{} didn't need any cross-validation to choose sparsity parameters. All the methods were run on the same hardware, a single Nvidia 2080Ti GPU with 16 CPUs.
Our method takes more time to converge than GOLEM. This is not surprising, since we are training a neural network with many Sinkhorn iterations per optimization step. The Ghoshal algorithm~\cite{ghoshalLearningLinearStructural2018} is very fast in comparison, consisting only of a precision estimation step then \(d\) stages of matrix multiplication. At high \(d\), GADGET is slower than our method. We found that speeding up GADGET by reducing the number of iterations resulted in dramatically reduced performance.
\begin{wraptable}{R}{0.5\linewidth}
  \vspace{-0.65cm}
  \centering
  \caption{The computing time required, in minutes, to converge to a solution for several approaches.}\label{tab:3}
  \vspace{0.1cm}
\begin{tabular}{@{}llllll@{}}
$d$            & 8   & 16   & 32   & 64   & 128  \\ \midrule
  GOLEM        & 25 & 30  & 40  & 65  & 90  \\
  GADGET       & 40  & 150  & 385 & 635  & 1200   \\
  GHOSHAL      & \textless 1 & \textless 1 & \textless 1  & 3  & 15  \\
  \name{} (Ours) & 50 & 160 & 300 & 350 & 900 \\ \bottomrule

\end{tabular}
\vspace{-0.3cm}
\end{wraptable}

\subsection{Increasing Dataset Size}
We expect that our method performs best in the low-data regime, where a Bayesian approach with correctly-specified priors has an advantage over non-Bayesian methods. To show this, we perform experiments with increased amounts of data, results of which are shown in figure \ref{fig:high-n}. We see that the advantage of \name{} is reduced as the quantity of data used is increased.
We suspect that our variational method suffers from a more challenging optimization problem as the posterior becomes more peaked around the correct solution. Methods such as amortized inference or sequential Monte-Carlo may help to combat this problem and improve our model at larger \(n\). 

\begin{figure}
  \centering
  \includegraphics[width=0.9\textwidth]{./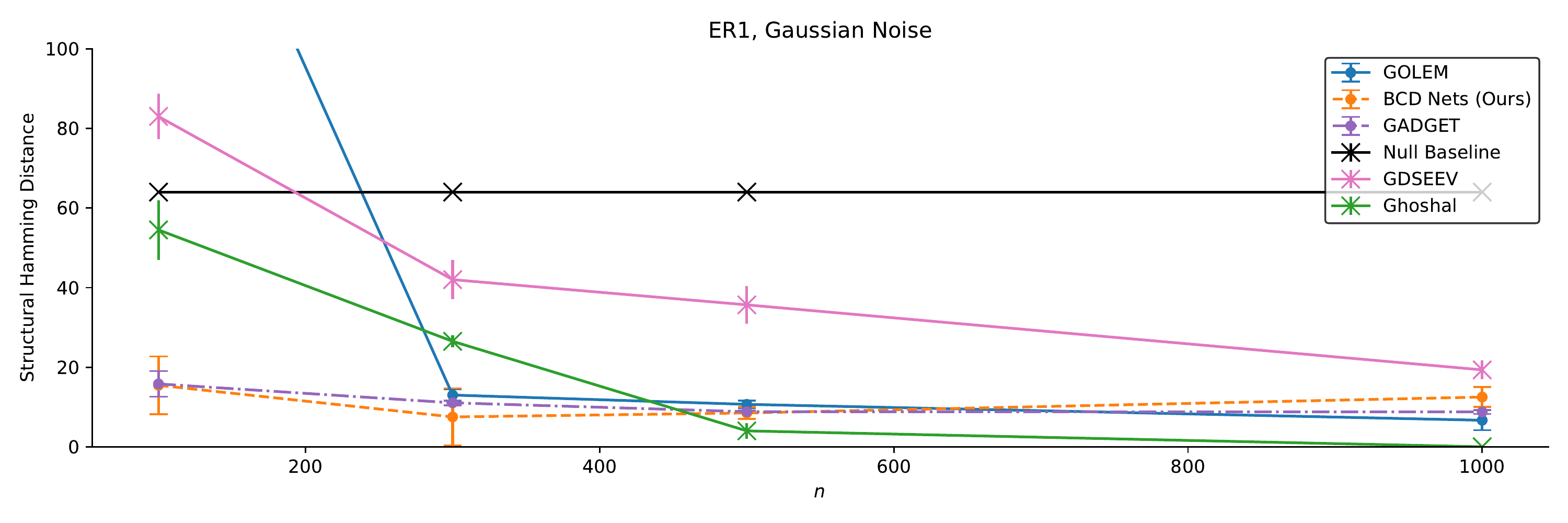}
  \caption{Structural Hamming Distance to the ground truth as a function of \(n\), on 64-dimensional, degree 1 Erdős-Renyi graphs. With large amounts of data, baselines catch up to \name{}.}\label{fig:high-n} 
\end{figure}

\subsection{Ablation}\label{sec:ablation}
Finally we perform an ablation of our algorithm to illustrate the importance of architectural choices. We use the \(d=32\) case, average degree 1, with Gaussian likelihood. We report the change in performance with a factored posterior (\textbf{Mean-Field}) and with a Laplace prior on the weights (\textbf{Laplace}) instead of a Horseshoe prior. We also report performance with a fixed \(100\) Sinkhorn steps (\textbf{Sinkhorn-100}), as opposed to the adaptive number that we use in our approach. We also report the KL-divergence between the sample covariance and the covariance induced by the sampled parameters \((\Sigma, L, P)\), which incorporates how well the posterior approximates the distribution of \(L\), unlike SHD\@. We give the results in table~\ref{tab:1} and \ref{tab:2}, in the appendix. Changing any of the parts of the algorithm result in SHD increasing from 11 to around 30. Interestingly, under the Laplace prior the SHD is high but the KL-divergence is only modestly higher than with the Horseshoe.
This indicates a learned DAG which can generate the data, but with extra edges due to an ineffective sparsity prior.

\section{Conclusion}\label{sec:conclusion}
We introduce \fullname{}, a framework for performing Bayesian causal discovery for linear-Gaussian structural equation models. \name{} exploit recent advances in variational inference to flexibly model the posterior distribution over SEM parameters given data, outperforming methods that only return point estimates. An explicit posterior is also useful in the high-stakes, low-data regimes where causal inference is increasingly used \citep{varian2016causal, shalitEstimatingIndividualTreatment2016}.

On the other hand, while indications are that our method may be robust to a small amount of misspecification, the validity of the linear modelling assumption should be carefully considered in applications. Assuming a linear relationship is present where a highly non-linear one exists could lead to harmfully incorrect inferences, particularly for minority groups with complex and under-studied variable interactions.
Furthermore, our approach assumes that no \emph{unobserved confounders} are present: variables which influence the observed variables but are not themselves observed. Since it is difficult to obtain useful inferential results under the effects of possibly arbitrary unknown confounders, the assumption of no unobserved confounders is standard in many areas of causal inference~\citep{peters2014identifiability}. However, this assumption is unlikely to hold exactly in real applications, and so the possible influence of unobserved confounders must be seriously considered and their influence reduced as much as possible before using our method. 
Future work could explore 
improvements to inference speed by replacing the Sinkhorn with more efficient algorithms from optimal transport.% \citep{altschulerNearlinearTimeApproximation2017}.
{\small \section*{Acknowledgements}}
This research was supported by NSF(\#1651565, \#1522054, \#1733686), ONR (N000141912145), AFOSR (FA95501910024), ARO (W911NF-21-1-0125) and the Sloan Fellowship. We thank the anonymous reviewers for their helpful comments.

{\small \section*{Funding Transparency Statement}}

1. \textbf{Funding}: 
This research was supported by NSF(\#1651565, \#1522054, \#1733686), ONR (N000141912145), AFOSR (FA95501910024), ARO (W911NF-21-1-0125) and the Sloan Fellowship

2. \textbf{Competing Interests}: 
There are no competing financial interests that could be perceived to influence the contents of this work.
\newpage
\bibliography{master}
\section*{Checklist}
% %%% BEGIN INSTRUCTIONS %%%
% The checklist follows the references.  Please
% read the checklist guidelines carefully for information on how to answer these
% questions.  For each question, change the default \answerTODO{} to \answerYes{},
% \answerNo{}, or \answerNA{}.  You are strongly encouraged to include a {\bf
% justification to your answer}, either by referencing the appropriate section of
% your paper or providing a brief inline description.  For example:
% \begin{itemize}
%   \item Did you include the license to the code and datasets? \answerYes{See Section~\ref{gen_inst}.}
%   \item Did you include the license to the code and datasets? \answerNo{The code and the data are proprietary.}
%   \item Did you include the license to the code and datasets? \answerNA{}
% \end{itemize}
% Please do not modify the questions and only use the provided macros for your
% answers.  Note that the Checklist section does not count towards the page
% limit.  In your paper, please delete this instructions block and only keep the
% Checklist section heading above along with the questions/answers below.
% %%% END INSTRUCTIONS %%%

\begin{enumerate}

\item For all authors...
\begin{enumerate}
  \item Do the main claims made in the abstract and introduction accurately reflect the paper's contributions and scope?
    \answerYes{}
  \item Did you describe the limitations of your work?
    \answerYes{Yes, in the conclusion we discuss how we are limited to linear relationships. We also emphasise throughout the paper that in many settings, the true DAG parameters are nonidentifiable from observational data}
  \item Did you discuss any potential negative societal impacts of your work?
    \answerYes{In the conclusion we discuss how wrongly assuming linear relationships could lead to incorrect inferences, and how those could be amplified in minority groups}
  \item Have you read the ethics review guidelines and ensured that your paper conforms to them?
    \answerYes{We conform to the ethics review guidelines}
\end{enumerate}

\item If you are including theoretical results...
\begin{enumerate}
  \item Did you state the full set of assumptions of all theoretical results?
    \answerYes{We give the full preconditions for theorem 1 to hold in the supplementary material}
	\item Did you include complete proofs of all theoretical results?
          \answerYes{We give a full proof of theorem 1 in the supplementary material}
\end{enumerate}

\item If you ran experiments...
\begin{enumerate}
  \item Did you include the code, data, and instructions needed to reproduce the main experimental results (either in the supplemental material or as a URL)?
    \answerNo{Given time constraints, we were not able to get the code in shape to release}
  \item Did you specify all the training details (e.g., data splits, hyperparameters, how they were chosen)?
    \answerYes{In the additional material we discuss all hyperparameters, including for the cross-validation for GOLEM}
	\item Did you report error bars (e.g., with respect to the random seed after running experiments multiple times)?
    \answerYes{Yes, all our results have error bars from running with several random seeds}
	\item Did you include the total amount of compute and the type of resources used (e.g., type of GPUs, internal cluster, or cloud provider)?
    \answerYes{This is described in the appendix, section I}
\end{enumerate}

\item If you are using existing assets (e.g., code, data, models) or curating/releasing new assets...
\begin{enumerate}
  \item If your work uses existing assets, did you cite the creators?
    \answerYes{Yes, we used the data producing code from Ng et al and they are cited}
  \item Did you mention the license of the assets?
    \answerYes{We described the license}
  \item Did you include any new assets either in the supplemental material or as a URL?
    \answerNo{}
  \item Did you discuss whether and how consent was obtained from people whose data you're using/curating?
    \answerNA{Under the license there is no need to get consent from the github maintainers as long as we properly cite}
  \item Did you discuss whether the data you are using/curating contains personally identifiable information or offensive content?
    \answerNA{}
\end{enumerate}

\item If you used crowdsourcing or conducted research with human subjects...
\begin{enumerate}
  \item Did you include the full text of instructions given to participants and screenshots, if applicable?
    \answerNA{}
  \item Did you describe any potential participant risks, with links to Institutional Review Board (IRB) approvals, if applicable?
    \answerNA{}
  \item Did you include the estimated hourly wage paid to participants and the total amount spent on participant compensation?
    \answerNA{}
\end{enumerate}

\end{enumerate}

\appendix
% NOTE: necessary when ptmx or no mathfont class option is given
\providecommand{\upGamma}{\Gamma}
\providecommand{\uppi}{\pi}

\newpage
\onecolumn
\appendix
\section{Marginalizing Gaussian Edge Weights}
If we set the prior over edge weights to be a \(d(d-1)/2\)-dimensional isotropic Gaussian with standard deviation \(\nu\), so \(l \sim \mathcal{N}(0, \Sigma_l)\) with \(\Sigma_l = I\nu^2\), then we can analytically marginalize out the weights in the
likelihood expression below.
For an arbitrary datapoint $X$, we have
\begin{align}
  \label{eq:1}  
  p(X|P, \Sigma)= C \int_{\R^{d(d-1)/2}} \sqrt{\det \Theta}\exp\left(-\frac{X^\top\Theta X + \nu^{-2}l^\top l}{2}\right) dl,                              
\end{align}
with \(\Theta  = {(I - PLP^\top)}^\top\operatorname{diag} (\sigma^{-2}) (I - PLP^\top)\) as discussed above, and \(L\in \R^{d\times d}\) a strictly lower triangular matrix with the \(d(d-1)/2\) lower entries forming the vector \(l\). Here, \(C = {(2\pi)}^{-d(d-3)/4} \nu^{-d(d-1)/2}\) is a constant term. 
The following theorem provides a closed-form expression for this integral.
\begin{theorem}
  The log-likelihood \(\log P(X |P, \Sigma)\) as defined in equation~\eqref{eq:1} is
  \begin{align}
    \label{eq:2}
    \log p(X|\Sigma, \nu, P) =  - & \frac{d}{2}\log 2\pi -\sum_{j=1}^d \log \sigma_j - (d-1)\log \nu + \frac{1}{2} X^\top (S -  \Sigma) X  - \frac{1}{2}\log \det D' \nonumber \\
    - & \frac{1}{2} \log \det (\Sigma'^{-1}+D'^{-1}\nu'^{-2}),
  \end{align}
  with \(D = \operatorname{diag}[PGP^\top X^2]\) where \(G\) is the strictly lower-triangular matrix with all lower-triangular entries equal to \(1\). \(D\) has (almost surely) one zero diagonal entry, with index \(i\). Primed matrices ($\Sigma'$, $D'$) are in \(\R^{d-1 \times d-1}\), with the \(i\)th diagonal entry removed. Finally, \(S \in \R^{d\times d}\) is a diagonal matrix with \(S_{jj} = \frac{\nu^2D_{jj}}{\sigma_j^2(\sigma_j^2 + \nu^2 D_{jj})}\). 
\end{theorem}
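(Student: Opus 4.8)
\section*{Proof proposal}

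The plan is to recognize that, after the determinant factor is removed, the integral in~\eqref{eq:1} is exactly the evidence (marginal likelihood) of a collection of \emph{independent} Bayesian linear regressions, one per node in the canonical ordering fixed by $P$. First I would dispose of the determinant: writing $W = PLP^\top$ and reusing the footnote computation $\det(I-PLP^\top)=\det(I-L)=1$, one gets $\det\Theta = \prod_j \sigma_j^{-2}$, which is \emph{independent of} $l$. Thus $\sqrt{\det\Theta}=\prod_j\sigma_j^{-1}$ pulls out of the integral (this will ultimately supply the $-\sum_j\log\sigma_j$ term) and only the Gaussian integrand in $l$ remains.

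Next I would change to the canonical frame $Z=P^\top X$. Since $P$ is a permutation, $(I-PLP^\top)X = P(I-L)Z$ and $P^\top\diag(\sigma^{-2})P$ is again diagonal; writing $\tau_i^2$ for the noise variance attached to position $i$ in this frame, the quadratic form becomes
\[
X^\top\Theta X \;=\; \sum_{i=1}^d \tau_i^{-2}\Bigl(Z_i-\sum_{k<i}L_{ik}Z_k\Bigr)^2 .
\]
The key structural observation is that the free entries of the strictly lower-triangular $L$ in row $i$, namely $L_{i1},\dots,L_{i,i-1}$, appear \emph{only} in the $i$th summand, and the prior term $\nu^{-2}l^\top l$ splits the same way. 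Hence the $d(d-1)/2$-dimensional integral factorizes into a product of $d$ independent integrals, the $i$th one over the $i-1$ entries of row $i$ (row $1$ is empty and contributes no integration).

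Each row integral is precisely the evidence of a Bayesian linear regression predicting $Z_i$ from its predecessors $z_{<i}=(Z_1,\dots,Z_{i-1})$, with an isotropic $\mathcal{N}(0,\nu^2 I)$ prior on the row coefficients and observation variance $\tau_i^2$. I would evaluate it either by the standard conjugate-Gaussian evidence formula, or directly by completing the square: the coefficient quadratic form is $A_i=\nu^{-2}I+\tau_i^{-2}z_{<i}z_{<i}^\top$, a rank-one update of a scaled identity, so the matrix determinant lemma gives $\det A_i$ and the Sherman--Morrison identity gives $A_i^{-1}$. Either route yields the one-dimensional marginal $\mathcal{N}(Z_i;0,\tau_i^2+\nu^2 D_i)$ with $D_i=\lVert z_{<i}\rVert^2=\sum_{k<i}Z_k^2$. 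Multiplying the per-row marginals and taking logs gives the compact intermediate form
\[
\log p \;=\; -\tfrac{d}{2}\log 2\pi \;-\; \tfrac12\sum_{i=1}^d\log\bigl(\tau_i^2+\nu^2 D_i\bigr)\;-\;\tfrac12\sum_{i=1}^d\frac{Z_i^2}{\tau_i^2+\nu^2 D_i}.
\]

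Finally I would translate this back into the stated matrix form, and this bookkeeping is where I expect the main obstacle to lie. Identifying the $D_i$ as the entries of $\diag[PGP^\top X^2]$ (with $G$ the all-ones strictly lower-triangular matrix) accounts for $D$, and the root node (position $1$, with empty predecessor set) is the almost-surely-unique zero diagonal entry, which is exactly why the two log-determinant terms use the primed $(d-1)\times(d-1)$ matrices with that index removed. Splitting $\log(\tau_i^2+\nu^2 D_i)=\log\tau_i^2+\log(1+\nu^2 D_i/\tau_i^2)$ recovers $-\sum_j\log\sigma_j$, while re-expressing $\prod_{i\neq\text{root}}(1+\nu^2 D_i/\tau_i^2)$ as $\nu^{2(d-1)}\det D'\,\det(\Sigma'^{-1}+D'^{-1}\nu'^{-2})$ produces the $-(d-1)\log\nu$ term together with the two log-dets. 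The quadratic exponent is collected into $\tfrac12 X^\top(S-\Sigma)X$, where the per-coordinate identity $\tfrac{\nu^2 D_i}{\tau_i^2(\tau_i^2+\nu^2 D_i)}-\tau_i^{-2}=-\tfrac{1}{\tau_i^2+\nu^2 D_i}$ reveals the stated $S$. The remaining care is purely constant-tracking: combining the $(2\pi)$ and $\nu$ normalizers of the two Gaussians with the per-row integral prefactors must reproduce the leading $-\tfrac{d}{2}\log 2\pi$, and checking that the $\log\nu$ contributions coming out of the determinant rewriting cancel against the explicit $-(d-1)\log\nu$ as intended.
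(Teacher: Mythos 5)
Your proposal is correct, and it reaches the stated formula by a genuinely different organization of the computation. The paper also pulls out $\sqrt{\det\Theta}=\prod_j\sigma_j^{-1}$ and then exploits the fact that the integrand depends on $l$ only through the $d-1$ nonzero components of $PLP^\top X$; but it does so via an explicit change of variables $l\mapsto u$ in the full $d(d-1)/2$-dimensional space (with the Jacobian of the non-unit-length directions supplying the $\det D'$ factor), integrates out the orthogonal complement, and then applies the multivariate Gaussian integral identity $\int\exp[-\tfrac12 u^\top A u + c^\top u]\,du=\sqrt{\det(2\pi A^{-1})}\exp[\tfrac12 c^\top A^{-1}c]$ with $A=\Sigma'^{-1}+D'^{-1}\nu^{-2}$. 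You instead factorize the integral row-by-row in the canonical frame $Z=P^\top X$ and recognize each factor as the evidence of a conjugate Bayesian regression, which hands you the one-dimensional predictive marginals $\mathcal{N}(Z_i;0,\tau_i^2+\nu^2 D_i)$ directly, with no Jacobian bookkeeping; the $\det D'$ and $\det(\Sigma'^{-1}+D'^{-1}\nu^{-2})$ terms then emerge only at the final rewriting step via $\tau_i^2+\nu^2D_i=\nu^2 D_i\,\tau_i^2\,(\tau_i^{-2}+\nu^{-2}D_i^{-1})$. Your intermediate form $-\tfrac d2\log 2\pi-\tfrac12\sum_i\log(\tau_i^2+\nu^2D_i)-\tfrac12\sum_i Z_i^2/(\tau_i^2+\nu^2D_i)$ is arguably the cleaner statement of the result, and it makes the almost-sure single zero entry of $D$ (the root node) and the removal of that index in the primed matrices transparent. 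Both routes are sound and land on the same expression; note only that the quadratic term in the theorem (and in the paper's proof) is written as $X^\top\Sigma X$ where $X^\top\Sigma^{-1}X$ is meant, and your per-coordinate identity $S_{ii}-\tau_i^{-2}=-1/(\tau_i^2+\nu^2D_i)$ correctly recovers the $\Sigma^{-1}$ version.
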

\begin{proof}\label{sec:marginalization-proof}
  
In the case where the edge weights are known to be drawn from a \(d(d-1)/2\)-dimensional isotropic Gaussian, \(l \sim \mathcal{N}(0, I\nu^2)\), then we are able to analytically marginalize out the weights in the term \(\Ex{l \sim \mathcal{N} }{p(X|L,P, \Sigma)}\). We have
\begin{align*}
p(X|P, \Sigma, \nu) = \int_{l} p(X|L, P, \Sigma) P(L|\nu)dl.
\end{align*}

We can compute this analytically. We write \(l\) for the vector of \(d(d-1)/2\) strictly lower-triangular
elements, and \(L\) for the matrix with these as the lower-triangular elements. We then have 

\begin{align*}
p(X|P, \Sigma, \nu) = \int_l \frac{1}{{(2\pi)}^{d/2} {(\det \Sigma)}^{1/2}} \frac{1}{{(2\pi)}^{d(d-1)/4} \nu^{d(d-1)/2}} \exp\left(-\frac{1}{2}X^\top \Theta X\right)\exp\left(-\frac{1}{2\nu^2}l^\top l\right) dl.
\end{align*}
assuming that the mean of \(l\) and \(X\) are zero. 
Now 
\begin{align*}
\Theta  = (I - W)\operatorname{diag} (\sigma^{-2}) {(I - W)}^\top,
\end{align*}
or 
\begin{align*}
\Theta  = {(I - PLP^\top)}^\top\operatorname{diag} (\sigma^{-2}) (I - PLP^\top),
\end{align*}
with 
\begin{align*}
W^\top = PLP^\top,
\end{align*}
(note that this is different from the \(W = PLP^\top\) parameterisation in the main text, but is equivalent)
and a vector of noise standard deviations \(\sigma\). We have \(\det \Theta = 1/\det \Sigma\) because 
\(\det (I - PLP^\top) = \det (PIP^\top - PLP^\top) = \det P(I - L)P^\top = 1\). 

Writing \(\Sigma\) for the diagonal matrix \(\operatorname{diag}(\sigma^2)\), if we expand out this matrix product we get 
\(\Theta  = \Sigma^{-1} - PL^\top P^\top \Sigma^{-1} PLP^\top - PL^\top P^\top \Sigma^{-1} - {(PL^\top P^\top \Sigma^{-1})}^\top\). Overall then, inside the exponential we have 
\(X^\top\Theta X  = X^\top\Sigma^{-1}X - X^\top PL^\top P^\top \Sigma^{-1} PLP^\top X - 2X^\top PL^\top P^\top \Sigma^{-1}X\). 
Since \(L\) is the integration variable, the first term (not containing \(L\)) can be taken outside the integral.

We now note an interesting fact about the \(X^\top PL^\top P^\top \Sigma^{-1} PLP^\top X - 2X^\top P L^\top P^\top \Sigma^{-1}X\) terms
in the integral. Although we are integrating over the \(d(d-1)/2\)-dimensional space of \(l\) values, this term
is only \(d\)-dimensional. For example, in the \(3\times 3\) case, with identity permutation, we have 
\begin{align*}
L = \begin{pmatrix} 0 & 0 & 0 \\ l_1 & 0 & 0 \\ l_2 & l_3 & 0 \end{pmatrix}.
\end{align*}
and so 
\begin{align*}
PLP^\top X  = \begin{pmatrix} 0 & 0 & 0 \\ l_1 & 0 & 0 \\ l_2 & l_3 & 0 \end{pmatrix} \begin{pmatrix} x_1 \\ x_2 \\ x_3 \end{pmatrix} = \begin{pmatrix} 0 \\ l_1 x_1 \\ l_2 x_1 + l_3 x_2 \end{pmatrix},
\end{align*}
where \(x_1, \ldots, x_d\) are the \(d\) individual components of the vector \(X\).
This means that if we are to do a change of variables into the basis
\begin{align*}
u = \begin{pmatrix} l_1 x_1 \\ l_2x_1 + l_3 x_2 \\ l_2 x_1 - l_3 x_2 \end{pmatrix}
\end{align*}
then along the third coordinate direction the \(PLP^\top X\) terms will not change, since it's orthogonal to the terms involved in the integrand.

So we now change variables from \(l\) to \(u\), where the first \(d-1\) components of \(u\) are given by (the \(d-1\) non-identically-zero components of)   \(PLP^\top X\), and the remaining \(d(d-1)/2 - (d-1)\) components are vectors orthonormal to those directions. We write \(u'\) for the \(d-1\)-dimensional vector formed from the nonzero components of \(PLP^\top X\).

Now since the first components of \(u\) are not unit length, we get a term in the \(dl\) differential element corresponding to the 
length of the vectors, which would be e.g. \(\left(x_1, \sqrt{x_1^2 + x_2^2}\right)\) in the case above. Given that we're substituting \(u\) for \(l\) we also will need to rescale
the \(\Sigma_l\) term. 

We can express this via a diagonal matrix \(D_{xP}\) which we can construct as
\begin{align*}
D = \operatorname{diag}\left(P GP^\top X^2\right),
\end{align*}
where \(G\) is the matrix with a \(1\) in the strictly lower-triangular entries and \(0\) otherwise.

We now have 
\begin{align*}
C(X, P)\int_{u'} \exp\left(-\frac{1}{2}\left(u'^\top (\Sigma^{'-1} + D^{'-1} \nu^{-2})u' - 2u'\Sigma^{'-1}X'\right)\right) du',
\end{align*}
with a constant \(C(X, P) = \frac{1}{{(2\pi)}^{d/2} {(\det \Sigma)}^{1/2}} \frac{1}{{(2\pi)}^{(d-1)/2} \nu^{d-1}}\exp\left(-\frac{1}{2}X^\top \Sigma X\right)\frac{1}{{(\det D')}^{1/2}}\),
and we have integrated out the \(d(d-1)/2 - (d-1)\) dimensions of \(u\) not involved in the term with \(\Theta\). Primed matrices (e.g. \(\Sigma' \in \R^{(d-1)\times(d-1)}\)) have had the column and row corresponding to the zero entry of \(D\) removed. 
We can then use the identity 
\begin{align*}
\int_x \exp \left[-\frac{1}{2} \mathbf{x}^{T} \mathbf{A} \mathbf{x}+\mathbf{c}^{T} \mathbf{x}\right] d \mathbf{x}=\sqrt{\det\left(2 \pi \mathbf{A}^{-1}\right)} \exp \left[\frac{1}{2} \mathbf{c}^{T} \mathbf{A}^{-T} \mathbf{c}\right]
\end{align*}

where here we have \(A = (\Sigma^{'-1} + D'^{-1}\nu^{-2})\) and \(c = \Sigma^{'-1}x'\), we get 

\begin{align*}
&\int_{u'} \exp\left(-\frac{1}{2}\left(u'^\top (\Sigma^{'-1} + D^{'-1} \nu^{-2})u' - 2u'\Sigma^{'-1}X'\right)\right) du'\\ = &{(2\pi)}^{(d-1)/2}{\left(\det \left(\Sigma'^{-1} + D'^{-1}\nu^{-2}\right)\right)}^{-1/2}\exp\left(\frac{1}{2}{(\Sigma'^{-1}X')}^\top {(\Sigma'^{-1} + D^{-1}\nu^{-2})}^{-1}\Sigma'^{-1}X'\right),
\end{align*}
We end up with 

\begin{align*}
\frac{1}{{(2\pi)}^{d/2} {(\det \Sigma)}^{1/2}} \frac{1}{ \nu^{d-1}}& \exp\left(-\frac{1}{2}X^\top \Sigma X\right)\frac{1}{{(\det D')}^{1/2}}\\  &\times {\left(\det \left(\Sigma'^{-1} + D'^{-1}\nu^{-2}\right)\right)}^{-1/2}\exp\left(\frac{1}{2}{(\Sigma'^{-1}X')}^\top {(\Sigma'^{-1} + D^{-1}\nu^{-2})}^{-1}\Sigma'^{-1}X'\right).
\end{align*}

We can also write the term \({(\Sigma'^{-1}X')}^\top {(\Sigma'^{-1} + D^{-1}\nu^{-2})}^{-1}\Sigma'^{-1}X'\) as \(XSX^\top\) with a diagonal matrix \(S\), where

\begin{align*}
S_{ii} = \frac{\nu^2d_i}{\sigma_i^2(\sigma_i^2 + \nu^2 d_i)},
\end{align*}

with \(d_i\) the \(i\)th diagonal entry of \(D\), so

\begin{align*}
p(X|\Sigma, \nu, P) = \frac{1}{{(2\pi)}^{d/2} \prod_i \sigma_i} \frac{1}{ \nu^{d-1}}\exp\left(-\frac{1}{2}X^\top \Sigma X\right)\frac{1}{{(\det D')}^{1/2}}  & {\left(\det \left(\Sigma'^{-1} + D'^{-1}\nu^{-2}\right)\right)}^{-1/2}\\ 
& \times \exp\left(\frac{1}{2}X^\top S X\right).
\end{align*}

This gives a log-likelihood

\begin{align*}
\log p(X|\Sigma, \nu, P) = &-\frac{d}{2}\log 2\pi -\sum_i \log \sigma_i - (d-1)\log \nu\\ - &\frac{1}{2}X^\top \Sigma X - \frac{1}{2}\log \det D' - \frac{1}{2} \log \det (\Sigma'^{-1}+D'^{-1}\nu^{-2}) + \frac12 X^\top S X.
\end{align*}
\end{proof}

\section{Additional Experimental Details}\label{sec:addit-exper-deta}
\textbf{Baseline GOLEM settings:}  As discussed in the main text, we found that the suggested~\cite{ngRoleSparsityDAG2020} regularization hyperparameter of \(\lambda_1 = 2\cdot10^{-2}\), led to poor performance. This poor performance at low \(n\) is described in appendix G of~\cite{ngRoleSparsityDAG2020}. However, we found that varying \(\lambda_1\) could increase performance. We therefore hold out one-fifth of the data when training GOLEM, and use this as a validation set for cross-validation over \(\lambda_1 \in \left\{2\cdot 10^{-2}, 2\cdot 10^{-1}, 2, 20\right\}\), scoring fit based on the sample mean-squared error \(X - W^\top X\). GOLEM has better performance when trained for longer than the \(10^5\) steps suggested, and so we trained for \(2\cdot 10^5\) steps, or until the gradient norm was lower than \(0.01\). Additionally, we tested a probabilistic formulation of GOLEM by training an ensemble of GOLEM models on bootstrapped dataset. We used 20 models for the ensemble, excluding 5\% of the dataset in each bootstrap dataset. We found this method was not competitive with the other baselines (e.g. SHD of 150 on the 32-dimensional ER2 graph), and didn't evaluate this baseline further due to its extremely large computational cost. 

\textbf{Implementation details:} In all experiments we use a two-layer multilayer perceptron with \(128\) hidden units to parameterize the function returning the parameters \(T\) of the Gumbel-Sinkhorn in \(q(P|L, \Sigma)\).
We use the adabelief optimizer~\cite{zhuang2020adabelief} with a step size \(10^{-3}\).
We also apply a scaled sigmoid to the logits produced by the MLP so that they are in the range \((-20, 20)\) to increase stability of training.
Nonetheless, the training can still be unstable at times, with sudden drops in the ELBO.
During training, we found it useful to monitor the KL divergence between \(\mathcal{N}(0, {\hat \Theta}^{-1})\) and \(\mathcal{N}(0, \Theta_\phi^{-1})\), where \(\hat \Theta\) is the empirical precision of the training data and \(\Theta_\phi\) is the average precision matrix described in section 2.2, generated by our parameterized posterior distribution.
We found that this quantity correlates strongly with the sample Hamming distance, and a sudden increase can indicate optimization issues during training.
In the synthetic data cases, we used the code\footnote{\url{https://github.com/ignavier/golem}, Apache 2.0 license} provided by the authors of~\cite{ngRoleSparsityDAG2020} to generate the data and compute the structural Hamming distance.
Due to the large number of Sinkhorn steps required, we used the implicit derivative based on the adjoint method, provided by the optimal transport tools library\footnote{\url{https://github.com/google-research/ott}, Apache 2.0 license}.
This avoids the high memory cost of differentiating through the computation graph of the Sinkhorn iterates. 
% cite not strictly necessary \cite{chen2016training}
We set \(\tau\) for the horseshoe prior as \(2 / (d\sqrt{n})\) for all the experiments, since we expect average degree on the order of 1. The structural Hamming distance reported for our predictions, is computed via sampling from the posterior distribution and averaging the SHD induced by all samples.

\textbf{Sinkhorn Implementation Details:} To ensure we stick closely to the set of permutation matrices, we use a relatively small value of \(\tau\), 0.2. We also use the straight-through gradient estimator \citep{bengio2013estimating}. This means that on the forward pass of the backpropogation algorithm, we use the \(\tau \to 0\) limiting value of the Sinkhorn algorithm. On the backward pass the gradients are taken with respect to the finite-\(\tau\) doubly-stochastic matrix. To obtain the \(\tau = 0\) output we use the Hungarian algorithm \citep{kuhn1955hungarian}. Compared to previous work using Gumbel-Sinkhorn \citep{menaLearningLatentPermutations2018, li2021discovering}, we note that we require a large number of Sinkhorn iterations for stable training, 
sometimes requiring over 1000 iterations to ensure the row and column sums equal 1 within a tolerance of \(0.01\).

\begin{figure}[h]
  \centering
  \includegraphics[width=1.1\textwidth]{./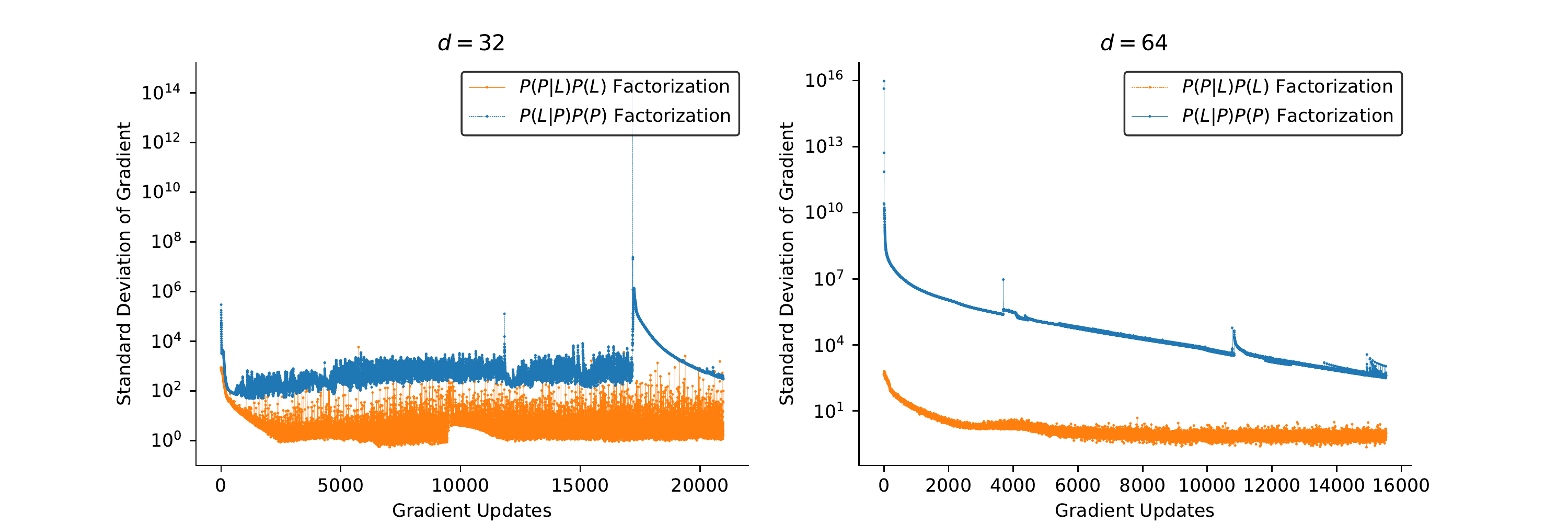}
  \caption{The variance of the gradient when training with two alternative factorizations. We note that the factorization where \(P\) is sampled first has much higher gradient variance, leading to unstable and difficult optimization.}\label{fig:variance}
\end{figure}

\section{Gradient Variance}\label{app:gradient_variance}

In the main paper we discuss the possibility of an alternate factorization of the probabilistic model, \(q_\phi(L, \Sigma|P)q(P)\) where we generate $P$ first, followed by \(L\) and \(\Sigma\).
However, empirically we find that the factorization order in equation~\eqref{eq:factorized_order} performs much better than the alternative factorization, 
with orders of magnitude lower variance in the gradient (Figure~\ref{fig:variance}). 
This suggests that the bias-variance trade-offs due to stochastic optimization in performing VI over $P$ (relaxations to discrete distribution over permutations, see Section~\ref{sec:perm_relax}) are more challenging than $(L, \Sigma)$ (continuous r.v.\ with de-facto pathwise gradient estimators, see Section~\ref{sec:l_var_family}). 
\section{Additional Experimental Results}
\subsection{Additional Prediction Statistics}\label{sec:addit-pred-stat}
In this section, we give the true positive rate, false positive rate, and false discovery rate of our algorithm and all the baselines. They are shown in figure~\ref{fig:fdr}.
We observe that we typically perform better on all the metrics, with the exception of the false positive rate for the gumbel cases compared to DLiNGAM. As the DLiNGAM method is designed for the gumbel case while \name{} use a Gaussian likelihood, this makes sense. 
\begin{figure}[h]
  \centering
  \includegraphics[width=1.0\textwidth]{./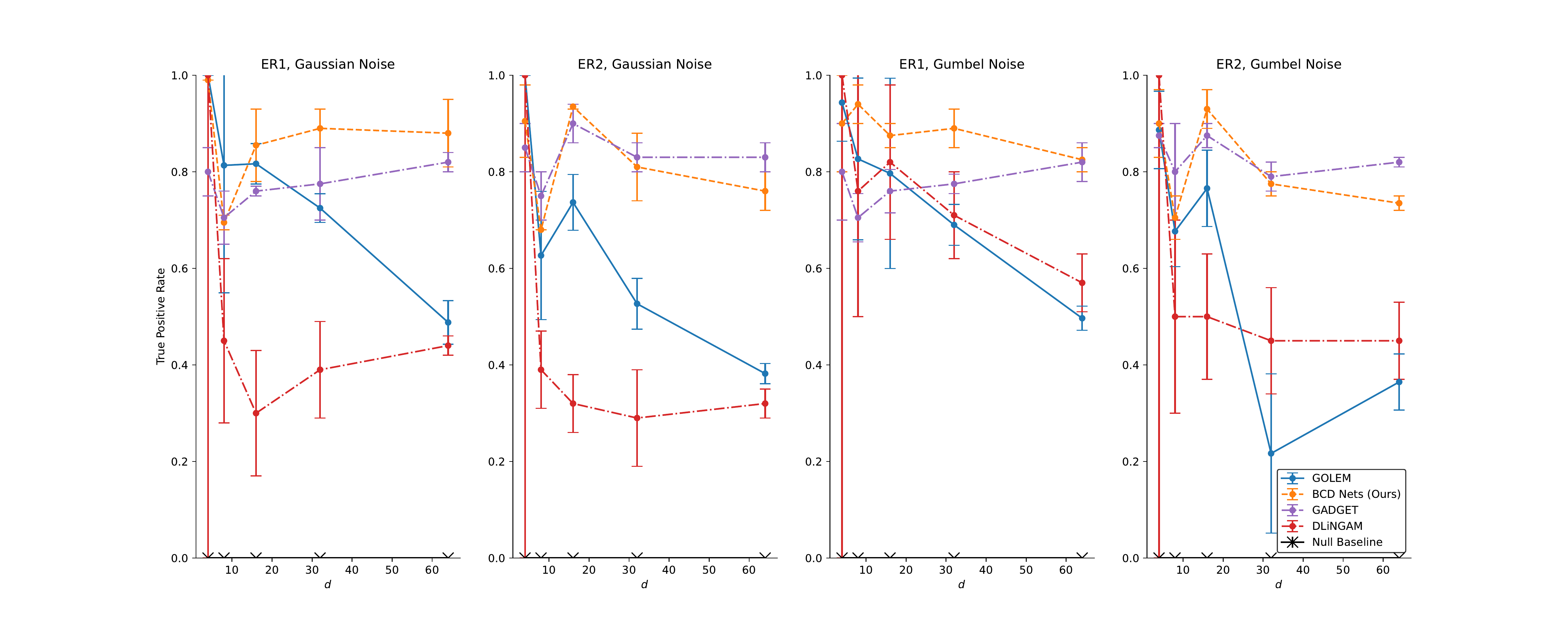}
  \includegraphics[width=1.0\textwidth]{./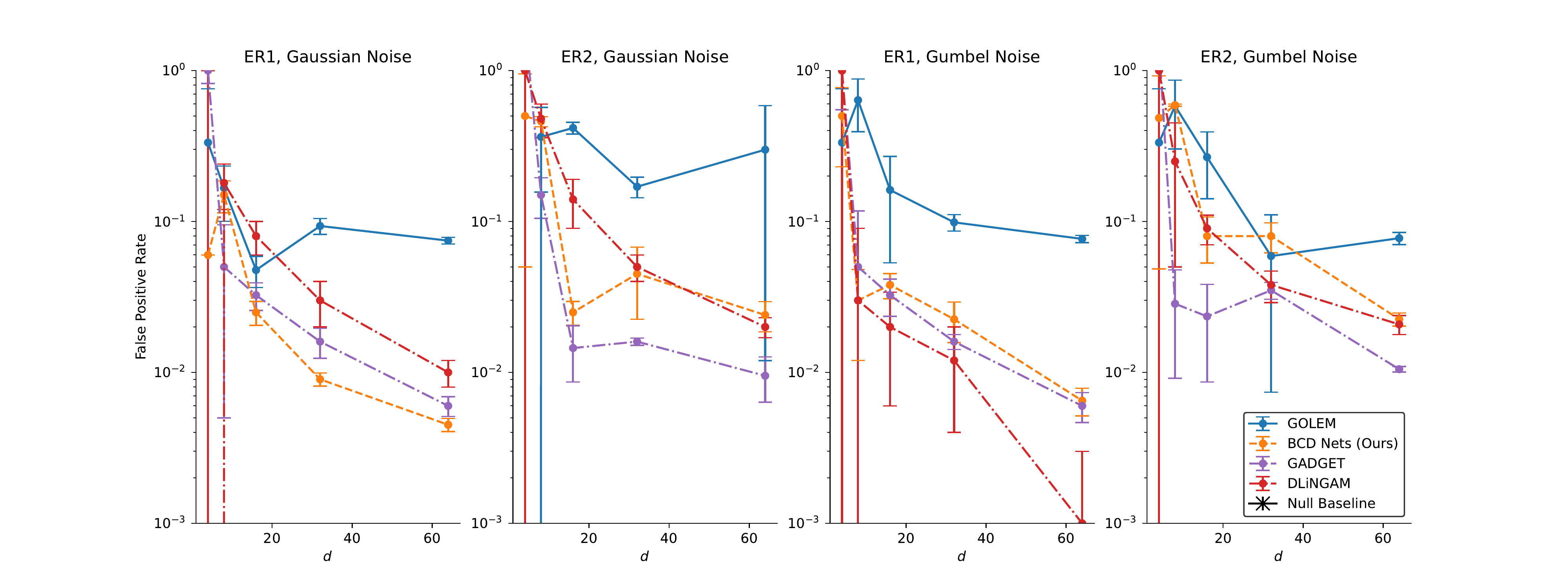}
  \includegraphics[width=1.0\textwidth]{./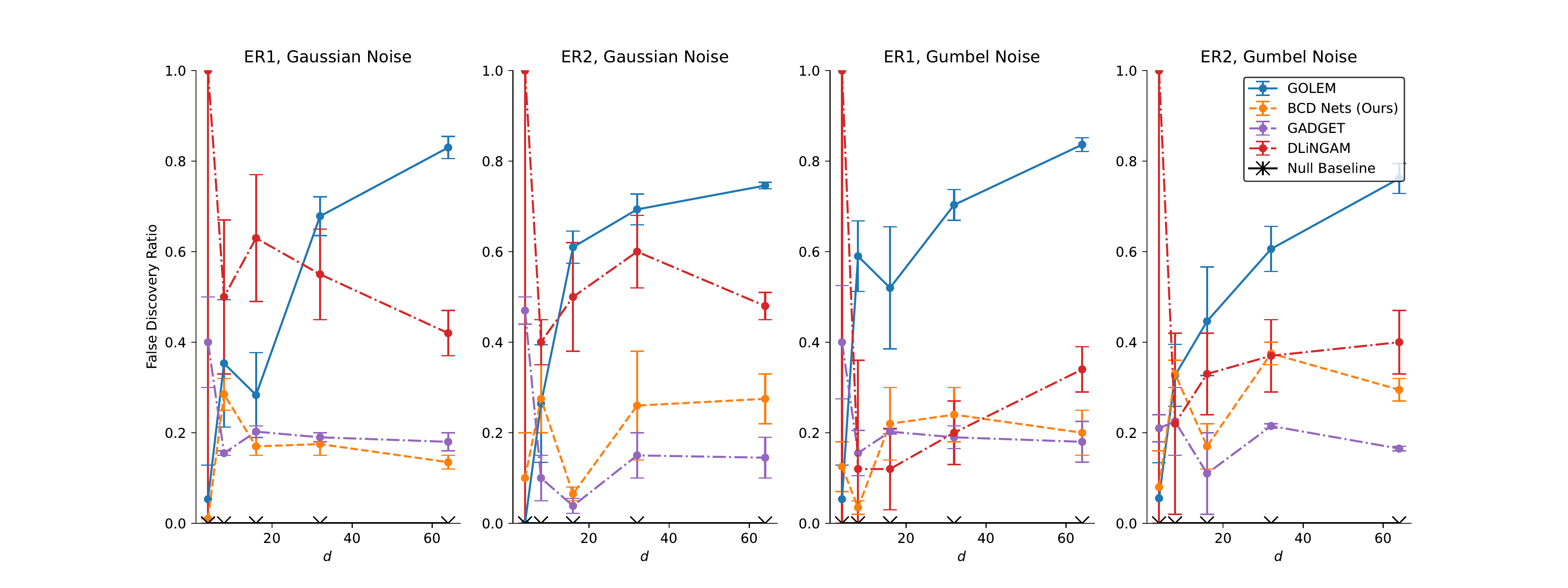}
  \caption{True positive rate, false positive rate, and false discovery rate for the experiments described in Section~\ref{sec:comp-maxim-likel}}\label{fig:fdr}
\end{figure}
\subsection{Qualitative Analysis}\label{sec:qualitative-analysis}
As a brief qualitative analysis the performance of our approach, we consider the three-dimensional case, where we are able to enumerate the \(3! = 6\) permutation matrices. We draw a set of \(n\) data points and train our model until convergence, in the equal-variance setting (ensuring a unique solution). We then draw one hundred samples from the joint distribution \(q(P, L, \Sigma)\), and plot a histogram of the estimated distribution of \(P\), shown in figure~\ref{fig:illustrative}.
\begin{figure}[h]
  \centering
  \includegraphics[width=0.4\textwidth]{./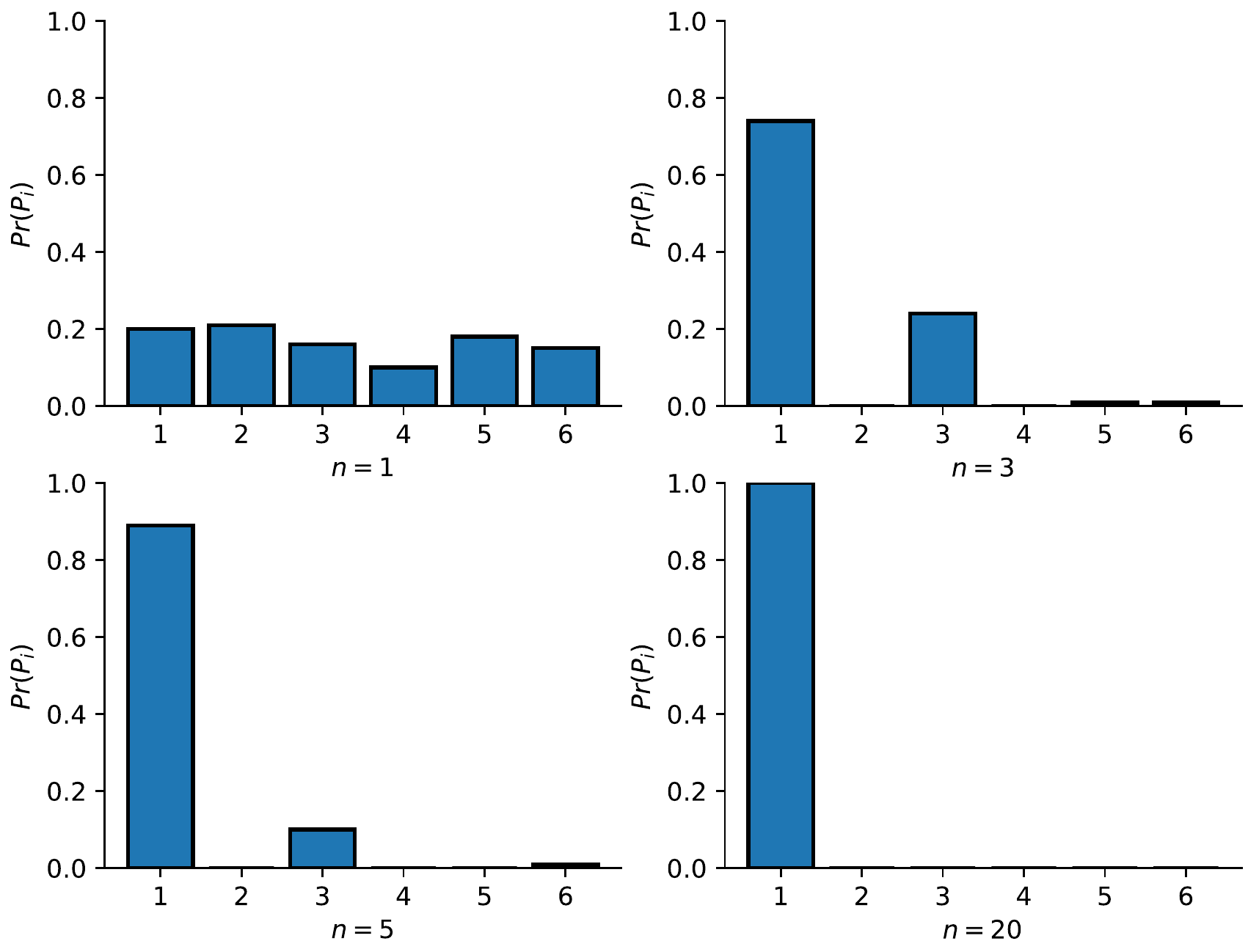}
  \caption{Concentration of the posterior distribution over the $3!=6$ possible permutations for \(n\) samples from a three-dimensional DAG, where the data-generating permutation is \(P_1\). The posterior quickly concentrates onto two candidates, then onto the ground-truth permutation.}\label{fig:illustrative}
\end{figure}
As intuitively expected, the posterior is quite diffuse for small amount of data, but quickly peaks around two possible candidate graphs. With additional data the ground-truth graph is picked out.

\section{Ablation Tables}
Here we give the ablation results when we take away parts of the algorithm. The results are show in tables~\ref{tab:1} and~\ref{tab:2}, demonstrating a degredation in performance when the architectural choices described in the main text are changed. 

\begin{table}[t]
\centering
\caption{Degradation of performance when removing components of our equal-variance model. Reducing the number of Sinkhorn iterations and using a mean-field posterior drastically degrade the solution quality.
  Using a Laplace prior instead of horseshoe modestly worsens performance w.r.t. KL divergence, but quite negatively impacts the SHD.\@
  }\label{tab:1}
\begin{tabular}{lll}
             & SHD & Sample-KL \\ \toprule
  \name{}-EV   &  11 \(\pm\)1   & 6.2 \(\pm\) 0.3          \\
  \midrule
Mean-Field             &  27 \(\pm\) 3 &  21 \(\pm\) 2         \\
Laplace      &  34 \(\pm\) 1  & 8 \(\pm\) 1          \\
Sinkhorn-100 &  38.5 \(\pm 0.5\)  & 73 \(\pm\) 7        
\end{tabular}
\end{table}

\begin{table}[]
\centering
\caption{Degradation of performance when removing components of our approach, for the non-equal-variance model. We see very similar behaviour to the equal-variance case.}\label{tab:2}
\begin{tabular}{lll}
  & SHD & Sample-KL \\ \toprule
\name{}-NV   & \(9 \pm 3\) & \(7 \pm 3\)          \\
  \midrule
Mean-Field   & \(33 \pm 1\)    & \(35.0 \pm 0.5\)          \\
Laplace      &  \(34.5 \pm 0.5\)   & \(9 \pm 1\)          \\
Sinkhorn-100 &  \(31 \pm 1\) & \(56 \pm 11\)        \\
\end{tabular}
\end{table}

\subsection{Causal Inference Experiment Results}\label{sec:caus-infer-exper}
Here we give full results of the causal inference experiment described in Section~\ref{sec:causal-inference}. Figure~\ref{fig:eid} shows the Wasserstein distance between the true interventional distribution and the estimated one, which we call the `Estimated Intervention Distance'. This is averaged over random choices of the edge to intervene on, as well as over random seeds. We observe that we are able to effectively estimate effects of interventions. The degredation of performance in GOLEM likely arises due to the method incorrectly assigning edge weights at higher \(d\) (see e.g.\ the much higher false positive rate of GOLEM compared to \name{} in figure~\ref{fig:fdr}) and so predicting a highly incorrect interventional distribution. 

\begin{figure}[h]
  \centering
  \includegraphics[width=0.9\textwidth]{./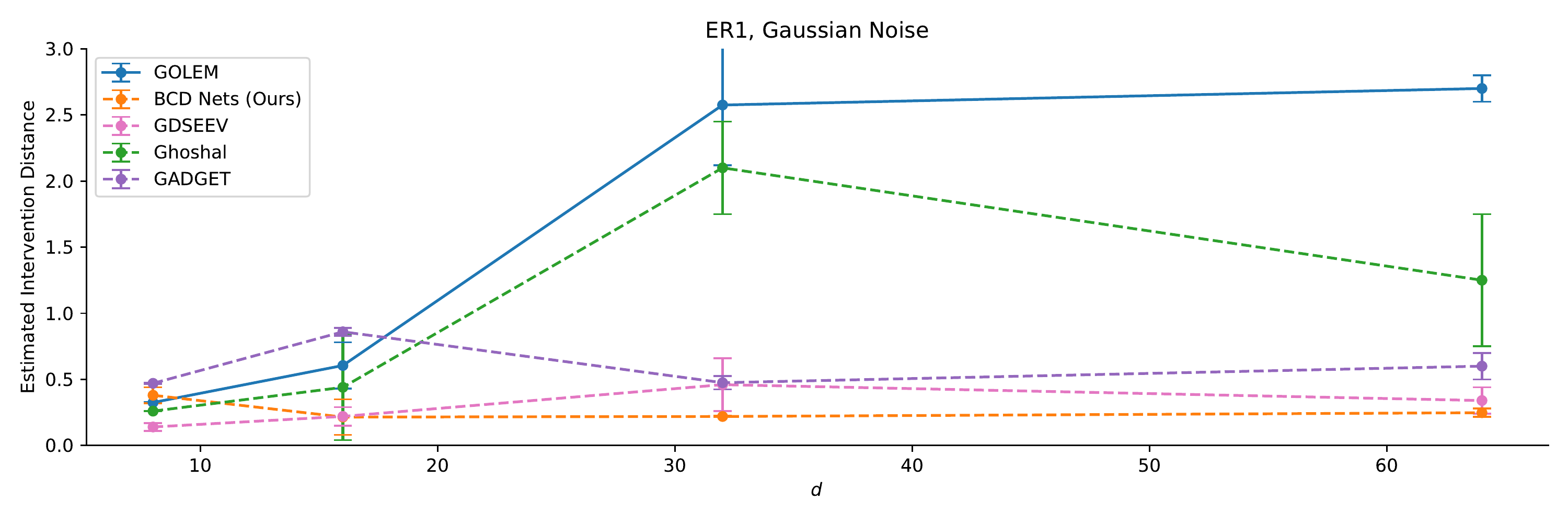}
  \caption{The average Wasserstein distance between the estimated distribution from an intervention and the true distribution from the intervention. Lower is better.}\label{fig:eid}

\end{figure}

\subsection{Variation of \(p\)}
In this section, we analyse the performance of the methods as the degree \(p\) increases. As we expect, the performance of all the methods decreases substantially as the degree increases. The full results are shown in figure \ref{fig:high-p}. We see that when the degree is 4, all the models do worse than chance. This is not too surprising as inferring the presence of 256 edges with only 100 data points is a very challenging task.

\begin{figure}
  \centering
  \includegraphics[width=0.9\textwidth]{./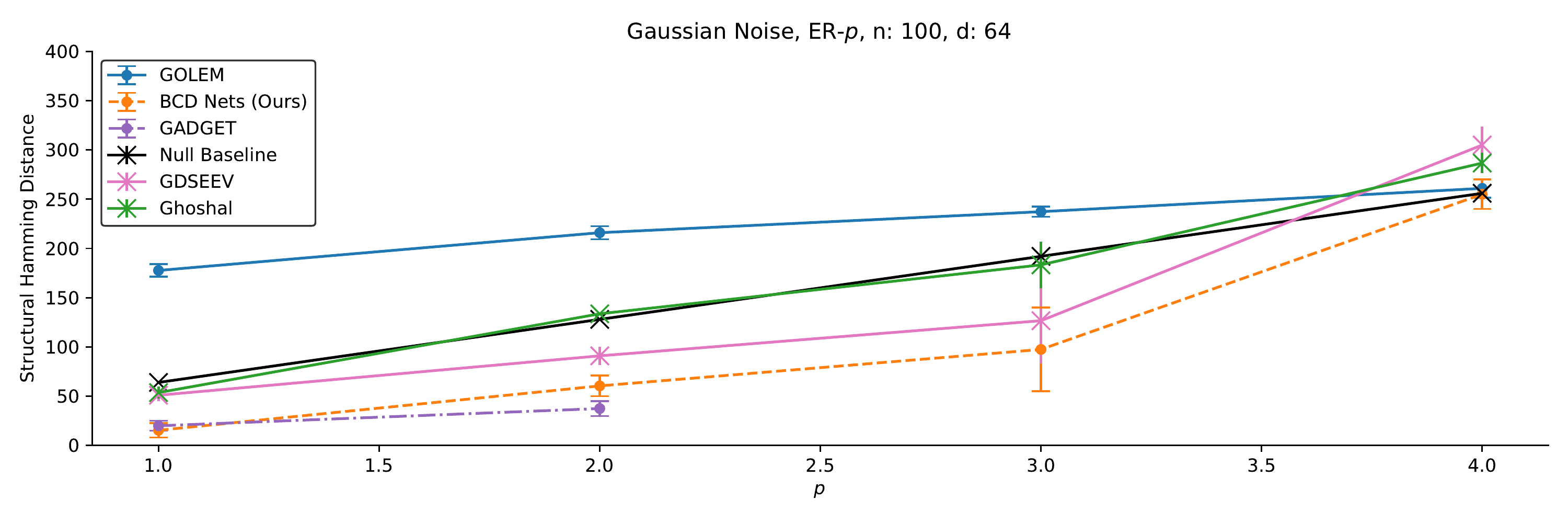}
  \caption{Structural Hamming Distance as a function of the degree \(p\), for the inferred graph compared to the true graph, on random ER-1, 64-dimensional graphs with 100 samples. We see that with large enough degree, all the methods perform poorly.}\label{fig:high-p}
\end{figure}

\newpage
\section{Recovery of Ground Truth for Posterior}\label{sec:recov-ground-truth}
Here we discuss the asymptotic behavior of the posterior distribution for the SEM parameters under an infinite amount of data which has been generated from a linear Gaussian generative process with true parameters \(\Sigma^*, W^*\). We sketch an argument for why we would recover the ground-truth parameters (or a certain equivalence class of the ground truth) under infinite data.

The posterior density for \(\Sigma, W\) under an observation of \(n\) data points \(X_1^n\), is
\begin{align}
  P(\Sigma, W|X_1^n) \propto {{P(X_1^n|\Sigma, W)P(\Sigma, W)}},
\end{align}
which we can write as a sum over data points
\begin{align}\label{eq:7}
  \log P(\Sigma, W|X_1^n) & = \sum_i \log P(X_i|\Sigma, W) + \log P(\Sigma, W),
\end{align}
where the likelihood is given by equation~\eqref{eq:3}. We now argue that the posterior concentrates around the set of SEM parameters quasi-equivalent to the ground-truth parameters, for quasi-equivalence defined in~\cite{ngRoleSparsityDAG2020}. We assume that the prior has support over the true parameters. Now, we assume that the posterior density concentrates around the set of maximum a posteriori (MAP) points. In~\cite{ngRoleSparsityDAG2020}, the problem
\begin{align}\label{eq:5}
  \operatorname{argmax}_{\Sigma, W \in \mathcal{W}}\left\{\sum_i^n \log P(X_i|\Sigma, W) + n \lambda R_{\text{sparse}}(W)\right\},
\end{align}
is studied, where \(\mathcal{W}\) is the set of DAGs, \(R_{\text{sparse}}\) is a regularizer encouraging sparsity, \(\lambda\) a chosen parameter, and \(n \to \infty\). The likelihood is the same as ours.
The solution is a set of \(W\)s which have corresponding DAGs \(G\), which are quasi-equivalent to the true DAG \(G^*\). The corresponding edge weights and noise variances are simply a (regularized) linear regression problem when conditioned on the graph structure, so have a Gaussian likelihood given \(G\). The conditions on \(\lambda\) in~\cite{ngRoleSparsityDAG2020} are not particularly well defined, only requiring ``weights for regularization terms such that the likelihood term dominates asymptotically''.

In order to match equation~\eqref{eq:5} with equation~\eqref{eq:7} and show that the MAP points of our distribution are quasi-equivalent to the true parameters, we would have to use a sparsity-encouraging prior and assume that we could choose \(\lambda = 1/n\). However, this would mean the likelihood grows while the regularizer stays constant.
By analogy to the BIC score~\cite{schwarz1978estimating}, we might want to choose a \(\lambda\) such that the regularization term grows modestly with \(n\). We could achieve this in our setting by choosing a prior parameterized by \(n\), such as a Laplace or Horseshoe prior with scale proportional to \(n^{-1/2}\).

Finally, we note that in~\cite{peters2014identifiability}, in the proof of theorem 1 it is directly shown that in the equal variance case, the solution to the problem in equation~\eqref{eq:5} is uniquely the ground truth set of parameters. So under the equal variance assumption and suitable sparsity priors, the posterior will concentrate to the ground truth parameters.

\section{Computational Use}
While developing this work, we estimate we used a total of around 1000 GPU-hours on a Nvidia 2080Ti, on an internal cluster, leading to emissions of around 100kg CO2 equivalent \footnote{Using \url{https://mlco2.github.io/impact }}.

\end{document}